\newtheorem{theorem}{Theorem}[section]     
\theoremstyle{definition}
\newtheorem{assumption}[theorem]{Assumption}
\theoremstyle{remark}
\newtheorem*{remark}{Remark}               
\theoremstyle{plain}
\theoremstyle{definition}
\theoremstyle{remark}
\icmltitlerunning{DP-AdamW: Investigating Decoupled Weight Decay and Bias Correction in Private Deep Learning}
\begin{document}

\twocolumn[
\icmltitle{DP-AdamW: Investigating Decoupled Weight Decay and Bias Correction in Private Deep Learning}



\icmlsetsymbol{equal}{*}

\begin{icmlauthorlist}
\icmlauthor{Jay Chooi}{equal,harv}
\icmlauthor{Kevin Cong}{equal,harv}
\icmlauthor{Russell Li}{equal,harv}
\icmlauthor{Lillian Sun}{equal,harv}

\icmlaffiliation{harv}{Harvard University, Cambridge, MA, USA}

\icmlcorrespondingauthor{Lillian Sun}{lilliansun@college.harvard.edu}
\icmlcorrespondingauthor{Kevin Cong}{kcong@college.harvard.edu}
\end{icmlauthorlist}

\icmlkeywords{privacy, differential privacy, optimizers, deep learning}

\vskip 0.3in
]



\printAffiliationsAndNotice{\icmlEqualContributionAlphabetical} 

\begin{abstract}
As deep learning methods increasingly utilize sensitive data on a widespread scale, differential privacy (DP) offers formal guarantees to protect against information leakage during model training. A significant challenge remains in implementing DP optimizers that retain strong performance while preserving privacy. Recent advances introduced ever more efficient optimizers, with AdamW being a popular choice for training deep learning models because of strong empirical performance. We study \emph{DP-AdamW} and introduce \emph{DP-AdamW-BC}, a differentially private variant of the AdamW optimizer with DP bias correction for the second moment estimator. We start by showing theoretical results for privacy and convergence guarantees of DP-AdamW and DP-AdamW-BC. Then, we empirically analyze the behavior of both optimizers across multiple privacy budgets ($\epsilon = 1, 3, 7$). We find that DP-AdamW outperforms existing state-of-the-art differentially private optimizers like DP-SGD, DP-Adam, and DP-AdamBC, scoring over 15\% higher on text classification, up to 5\% higher on image classification, and consistently 1\% higher on graph node classification. Moreover, we empirically show that incorporating bias correction in DP-AdamW (DP-AdamW-BC) consistently decreases accuracy, in contrast to the improvement of DP-AdamBC improvement over DP-Adam.
\end{abstract}

\section{Introduction}

In recent years, deep learning has achieved widespread adoption, with applications ranging from natural language processing to image generation \citep{brown_language_2020, ho_denoising_2020}. However, models trained on large and sensitive datasets have been shown to be vulnerable to privacy attacks on their training data, raising significant privacy conerns \citep{carlini2021extracting, carlini2022membership, balle2022reconstructing}. Differential Privacy (DP) provides a rigorous mathematical framework to address these concerns, enabling the development of algorithms with provable guarantees against leaking individual-specific information \citep{dwork2006calibrating}. In stochastic optimization, this has been most commonly realized by the $\textit{DP-SGD}$ algorithm introduced by \citet{abadi2016deep}. The first application of DP in the deep learning model training pipeline, this approach adds Gaussian noise of variance $\sigma^2 C^2$ after clipping per-sample gradients to a radius $C$. While DP-SGD offers strong theoretical guarantees, its empirical performance, as with vanilla SGD, often lags behind the adaptive Adam family of optimizers that dominate non-private deep learning. Subsequent work therefore lifted Adam into the private setting; specific differentially private optimizers that were developed to potentially improve the privacy-utility tradeoff include DP-GD, DP-RMSprop, and DP-Adam \citep{zhou2020private, li2022private}.

Motivated by this line of work, we study differentially private variants of the AdamW optimizer which has been empirically shown to achieve improved generalization performance over Adam, particularly enabling competitive performance with SGD on image classification tasks \cite{loshchilov2017decoupled}. To provide a differentially private implementation of AdamW, we introduce the \textit{DP-AdamW} algorithm by decoupling the weight decay from the gradient update in the DP-Adam algorithm \citep{tang_dp-adambc_2023}. Additionally, we introduce \textit{DP-AdamW-BC}, a variant of DP-AdamW that corrects for the DP bias in the second moment estimator, which \citet{tang_dp-adambc_2023} characterized and corrected for DP-Adam, resulting in DP-AdamBC.


We provide theoretical results on the privacy and convergence guarantees of the new optimizers and in particular show that DP-AdamW maintains similar privacy bounds and convergence guarantees to DP-AdamBC. We then compare the performance of our DP-AdamW and DP-AdamW-BC to previous optimizers, empirically finding that they consistently improve on the privacy-accuracy tradeoff compared to their DP-Adam counterparts. In contrast to the findings of \citet{tang_dp-adambc_2023}, we show that incorporating bias correction in DP-AdamW-BC consistently decreases accuracy across diverse tasks. We demonstrate that DP-AdamW outperforms DP-SGD even in image classification, for which Adam is known to not generalize as well \citep{loshchilov2017decoupled}.


The remainder of our paper proceeds as follows. In Section \ref{theory}, we present the DP-AdamW and DP-AdamW-BC algorithms and their theoretical guarantees. In Section \ref{exp}, we detail our experiments and present the results of using the DP-AdamW and DP-AdamW-BC optimizers in comparison to DP-Adam and DP-AdamW. In Section \ref{disc}, we discuss these results, and in Sections \ref{futurework} and \ref{concl}, we conclude and offer further avenues of work. 

\section{Motivation and Related Work}

Differentially private optimization was first studied by \cite{abadi2016deep}, who introduced Differentially Private Stochastic Gradient Descent (DP-SGD). DP-SGD achieved provable privacy guarantees via adding noise to the gradients during gradient descent. However, the additional noise led to significantly worse performance in comparison to non-private models. This led to the work of \cite{li_large_2022}, who showed that finetuning pretrained models under DP-Adam achieved strong performance on par with non-private models. Moreover, they demonstrated a scaling law of DP-models by parameters and empirically invalidated the hypothesis that DP-ML suffers from dimension-dependent performance degradation. This set the stage for practical applications of DP on deep learning. More recently, \cite{tang_dp-adambc_2023} proposed an improvement to DP-Adam, namely DP-AdamBC, adding a correction term in DP-Adam to remove the bias in the second moment estimate arising from the added gaussian noise. DP-AdamBC achieved an improvement over DP-Adam across several different target privacy budgets on image, text and graph node classification tasks. 

Motivated by these developments, our present work aims to improve the performance of DP-Adam and DP-AdamBC via adding weight decay, thereby yielding DP-AdamW and DP-AdamW-BC. We finally evaluate the performance of these optimizers using the same suite of tasks in their paper.

\section{DP-AdamW: Algorithm and Theoretical Guarantees}\label{theory}

In this section, we introduce the DP-AdamW algorithm and its variant, DP-AdamW-BC, and present theoretical results on their privacy and convergence guarantees 

\subsection{DP-AdamW: The Algorithm}\label{alg}

We now introduce the DP-AdamW algorithm. The main point is to add Gaussian noise to the gradients in the AdamW algorithm. Given a batch of gradients $g_i$, we compute the average gradient $\overline g$. Define the noised gradient $$\tilde g = \frac1B \sum_i \frac{g_i}{\max(1, \frac{\|g_i\|_2}{C})} + \frac 1B \mathcal N(0, \sigma^2 C^2 \mathbf I).$$ We then use the noised gradient $\tilde g$ as the input gradient into the AdamW algorithm of \cite{loshchilov2017decoupled}.\footnote{Alternatively, one can view this as adding decoupled weight decay into the DP-Adam algorithm of \cite{tang_dp-adambc_2023}} The resulting DP-AdamW algorithm is given in Algorithm \ref{dpadamw}.

\begin{algorithm}[!ht]
  \caption{DP-AdamW}
  \label{dpadamw}
  \begin{algorithmic}[1]
    \Require total steps $T$, learning-rate schedule $\{\eta_t\}_{t=1}^T$, clip norm $C$, weight-decay $\lambda$, hyper-parameters $\alpha,\,\beta_1,\,\beta_2$, noise multiplier $\sigma$, initial parameters $\theta_0$, numerical stability constant $\epsilon_0$
    \State $m_0 \gets 0$, $v_0 \gets 0$
    \For{$t \gets 1$ \textbf{to} $T$}
      \State $g_t \gets \nabla f\!\bigl(\theta_{t-1}\bigr)$
      \State $\displaystyle
        \tilde g_t \gets 
        \frac{1}{B}\sum_i 
        \frac{g_i}{\max\left(1,\frac{\|g_i\|_2}{C}\right)}
        \;+\; \frac{1}{B}\,\mathcal N\!\bigl(0,\sigma^2 C^2 I\bigr)$
      \State $m_t \gets \beta_1 m_{t-1} + (1-\beta_1)\,\tilde g_t$
      \State $v_t \gets \beta_2 v_{t-1} + (1-\beta_2)\,\tilde g_t^{\,2}$
      \State $\hat m_t \gets \frac{m_t}{1-\beta_1^{\,t}}$
      \State $\hat v_t \gets \frac{v_t}{1-\beta_2^{\,t}}$
      \State $\displaystyle
        \theta_t \gets \theta_{t-1}
        - \eta_t \left(
            \frac{\hat m_t}{\sqrt{\hat v_t + \epsilon_0}}
            + \lambda\,\theta_{t-1}\right)$
    \EndFor
  \end{algorithmic}
\end{algorithm}

Moreover, \cite{tang_dp-adambc_2023} observe that within the Adam update, the estimate of the second moment is biased due to the use of a noised gradient. They offer a method of bias-correction, in particular by replacing the update term $\frac{\hat m_t}{\sqrt {\hat v_t}}$ with $$\frac{\hat m_t}{\sqrt{\hat v_t - \left(\frac{\sigma C}{B}\right)^2}} = \frac{\hat m_t}{\sqrt{\hat v_t - \Phi}}.$$ We also consider this modification, which is given in Algorithm \ref{dpadamw-bc}. 

\begin{algorithm}[!ht]
  \caption{DP-AdamW-BC}
  \label{dpadamw-bc}
  \begin{algorithmic}[1]
    \Require total steps $T$, learning-rate schedule $\{\eta_t\}_{t=1}^T$, clip norm $C$, weight-decay $\lambda$, hyper-parameters $\alpha,\,\beta_1,\,\beta_2$, noise multiplier $\sigma$, initial parameters $\theta_0$, numerical stability constant $\gamma$
    \State $m_0 \gets 0$, $v_0 \gets 0$
    \For{$t \gets 1$ \textbf{to} $T$}
      \State $g_t \gets \nabla f\!\bigl(\theta_{t-1}\bigr)$
      \State $\displaystyle
        \tilde g_t \gets 
        \frac{1}{B}\sum_i 
        \frac{g_i}{\max\left(1,\frac{\|g_i\|_2}{C}\right)}
        \;+\; \frac{1}{B}\,\mathcal N\!\bigl(0,\sigma^2 C^2 I\bigr)$
      \State $m_t \gets \beta_1 m_{t-1} + (1-\beta_1)\,\tilde g_t$
      \State $v_t \gets \beta_2 v_{t-1} + (1-\beta_2)\,\tilde g_t^{\,2}$
      \State $\hat m_t \gets \frac{m_t}{1-\beta_1^{\,t}}$ 
      \State $\hat v_t \gets \frac{v_t}{1-\beta_2^{\,t}}$
      \State $\displaystyle
        \theta_t \gets \theta_{t-1}
        - \eta_t (
            \frac{\hat m_t}{\sqrt{\max(\hat v_t  - (\frac{\sigma C}{B})^2, \gamma)}}
            + \lambda\,\theta_{t-1})$
    \EndFor
  \end{algorithmic}
\end{algorithm}

\subsection{Privacy Guarantees}
Both algorithms carry the same privacy guarantees as DP-SGD and DP-AdamBC. Formally, we have the following result, analogous to the privacy guarantees in \citet{abadi2016deep} and \cite{tang_dp-adambc_2023}.

\begin{theorem}[cf. Proposition 1 of \cite{tang_dp-adambc_2023}]\label{privacy_guarantee}
Suppose that the DP-SGD optimizer $DP-SGD(\theta, X, y, C, \sigma, B)$ satisfies $(\epsilon, \delta)$-DP with privacy analysis $\phi(T, \theta_i)$. Then both $DP-AdamW(\theta, X, y, C, \sigma, B)$ and $DP-AdamW-BC(\theta, X, y, C, \sigma, B)$ satisfy $(\epsilon, \delta)$-DP with the same privacy analysis $\phi(T, \theta_i)$. 
\end{theorem}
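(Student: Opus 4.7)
The plan is to reduce the statement to a direct application of the post-processing invariance of $(\epsilon,\delta)$-differential privacy. The first step is to isolate where sensitive data actually touches each algorithm: in both DP-AdamW and DP-AdamW-BC, the dataset $(X,y)$ enters only through the per-sample gradients $g_i$, which are then transformed into the noised gradient
\[
\tilde g_t \;=\; \frac{1}{B}\sum_i \frac{g_i}{\max\bigl(1,\|g_i\|_2/C\bigr)} \;+\; \frac{1}{B}\,\mathcal N\!\bigl(0,\sigma^2 C^2 I\bigr).
\]
This is precisely the DP-SGD mechanism with clip norm $C$, noise multiplier $\sigma$, and batch size $B$. So the sequence $(\tilde g_1,\ldots,\tilde g_T)$ carries exactly the same privacy accounting $\phi(T,\theta_i)$ that is assumed for DP-SGD.

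Next I would show that every other quantity in the loop is a deterministic, data-free function of the previously released noised gradients and parameters. The moment updates $m_t, v_t$, the debiasing $\hat m_t, \hat v_t$, the correction term $\Phi=(\sigma C/B)^2$ appearing in Algorithm~\ref{dpadamw-bc}, the decoupled weight-decay term $\lambda\theta_{t-1}$, and the final parameter update
\[
\theta_t \;=\; \theta_{t-1} - \eta_t\!\left(\frac{\hat m_t}{\sqrt{\hat v_t+\epsilon_0}} + \lambda\,\theta_{t-1}\right)
\]
(and its analogue with the $\Phi$-correction) involve no fresh query to the dataset. Hence, conditional on $(\tilde g_1,\ldots,\tilde g_t)$ and $\theta_0$, the iterate $\theta_t$ is a measurable function of non-private quantities.

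The proof then proceeds by induction on $t$: assuming the release of $(\theta_0,\ldots,\theta_{t-1})$ satisfies the DP-SGD accounting at step $t-1$, the gradient query at step $t$ introduces exactly the same subsampled Gaussian mechanism as in DP-SGD, so adaptive composition (via the moments accountant / R\'enyi-DP composition that underlies $\phi$) yields the same bound at step $t$; the subsequent Adam-style algebra is post-processing and consumes no additional privacy budget. Taking $t=T$ gives the claimed $(\epsilon,\delta)$-DP guarantee for both algorithms.

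The only subtlety, and the one point that is worth checking carefully rather than being a true obstacle, is the adaptive nature of the mechanism: the gradient $g_t=\nabla f(\theta_{t-1})$ depends on a previously released parameter, so one must appeal to the adaptive (rather than non-adaptive) composition theorem. But this dependence is identical in form to DP-SGD's, and the Abadi et al.\ moments-accountant argument cited by \citet{tang_dp-adambc_2023} handles it verbatim, so invoking their Proposition~1 together with the post-processing lemma closes the argument for both DP-AdamW and DP-AdamW-BC.
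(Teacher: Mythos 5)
Your proposal is correct and follows essentially the same route as the paper's proof: both identify the noised-gradient computation as the identical privatization mechanism used in DP-SGD and then dispatch the Adam-style updates (moments, bias correction, decoupled weight decay) as adaptive post-processing that consumes no additional privacy budget. Your explicit induction and appeal to adaptive composition merely spell out what the paper compresses into its invocation of the adaptive post-processing property, so there is no substantive difference.
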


The proof of this theorem is given in Appendix \ref{privacy_appendix_a}; it follows the outline of Proposition 1 of \cite{tang_dp-adambc_2023}.

\subsection{Convergence Guarantees}

We now show that the DP-AdamW and DP-AdamW-BC optimizers converge\footnote{Here, convergence is in the sense of `average gradient of the true objective function is small'. See Theorems \ref{conv1} and \ref{conv2} for details.} under reasonable assumptions. Our results are based on and take similar form to those of \cite{défossez2022simpleconvergenceproofadam}, who proved convergence guarantees for vanilla Adam, and \cite{tang_dp-adambc_2023}, who proved analogous guarantees for DP-AdamBC. Our main contribution is to address the weight decay term; for this, the main idea is roughly to show that the parameters $\|\theta_t\|$ are bounded and apply standard inequalities. 

Throughout this subsection, we will work under the following natural assumptions, found in \cite{tang_dp-adambc_2023}. Let $F: \mathbb R^d \rightarrow \mathbb R$ denote the objective function and $f: \mathbb R^d \rightarrow \mathbb R$ denote a stochastic function with $\mathbb E(\nabla f(\theta)) =  \nabla F(\theta)$.\footnote{That is, $F$ is the true objective function, while $f$ is the estimate of the objective function. For instance, if one is using a squared-error loss, then $f$ is the empirical MSE, while $F$ is the expected squared-error loss.} Let $\|\cdot \|$ denote the $L_2$-norm. We then have the following.

\begin{assumption}\label{as1}
    $F$ is bounded below: $F(\theta) \geq F_*$ for all $\theta$.
\end{assumption}

\begin{assumption}\label{as2}
    The gradients $\|\nabla f_t(\theta)\| \leq C_1 \leq C$ are uniformly almost surely bounded. 
\end{assumption}

\begin{assumption}\label{as3}
    The gradient of $F$ is $L$-Lipschitz continuous: $\|\nabla F(\theta) - \nabla F(\theta')\| \leq L \|\theta - \theta'\|$. 
\end{assumption}

\begin{remark}
    These are the same assumptions as found in \cite{défossez2022simpleconvergenceproofadam} and \cite{tang_dp-adambc_2023}, but we will provide some further intuition. Assumption \ref{as1} says that $F$ \textit{can} be optimized; this is clearly necessary and moreover holds in practice (for instance, $F_* = 0$ holds for any squared-error loss). Assumption \ref{as2} assumes that the gradients do not blow up. This assumption is necessary for theoretical results, but does not always hold in practice; exploding gradients often occur in empirical studies. Note that we additionally assume $C_1 \leq C$ for simplicity, since bounded gradients implies that gradient clipping is not necessary. Lastly, Assumption \ref{as3} is again necessary to attain effective theoretical bounds. It is generally true if $F$ is a reasonably `smooth` function. 
\end{remark}
We divide our results into two settings. We first have the following guarantees on DP-AdamW and DP-AdamW-BC without momentum, i.e. the regime of $\beta_1 = 0$. 

\begin{theorem}\label{conv1}
    Under Assumptions \ref{as1}, \ref{as2}, and \ref{as3}, suppose that $\beta_1 = 0$, $0 < \beta_2 < 1$, $\alpha \in (0,1)$, and the learning rate follows $\eta_t = \eta \sqrt{\frac{1 - \beta_2^t}{1 - \beta_2}}$. Let $\Phi  = \left(\frac{\sigma C}{B}\right)^2$ denote the bias correction term and let $\mu^* = \frac{\beta_2(1 - \beta_2^T)}{1 - \beta_2}[(\Phi - \tfrac{2\Phi}{\pi}) + \bigl(C + \sqrt{\tfrac{2\Phi}{\pi}}\bigr)^2]$, $\nu^* = 2\beta_2^2 \Phi \sqrt{\frac{1 - \beta_2^{2T}}{1 - \beta_2^2}}$, and $b^* = 4\beta_2 \Phi$ be constants. Then there exists a constant $c(\lambda) = c(\beta_1, \beta_2, \lambda, \eta, C_1, L, \theta_0, \epsilon_0, \Phi)$ such that whenever $$\delta_0 \geq \begin{cases}
        \mu^* + \sqrt{\ln (1 / \frac{\alpha}{2T})(2(\nu^*)^2)} & 0 \leq \delta_0 \leq \frac{(\nu^*)^2}{b^*} \\ \mu^* + \ln (1 / \frac{\alpha}{2T}) 2b^* & \delta_0 \geq \frac{(\nu^*)^2}{b^*},
    \end{cases}$$ we have with probability at least $1 - \alpha$ that for DP-AdamW, 

    \begin{center}
        $\frac1T \sum_{0}^{T-1} \mathbb E\|\nabla F(\theta_{t-1})\|^2 \leq \dfrac{2(\delta_0+C_1)(F(\theta_0) - F_*)}{\eta T}$ \\ $+ \left(\dfrac{4d(C^2 + \Phi)}{\sqrt {1 - \beta_2}} + \dfrac{\eta dL \sqrt{C^2 + \Phi}(1 + \lambda)}{1 - \beta_2}\right) \cdot \dfrac RT$ \\ $+ \dfrac{1}{2T} \left(C_1^2 + \|\theta_0\|^2 + c(\lambda) \max_t (\eta_t+\eta_t^2) R\right)$ \\ $\left(\lambda \sum_{t=1}^T \eta_t + \dfrac{L}{2} (\lambda + \lambda^2) \sum_{t=1}^T \eta_t^2\right),$
    \end{center}
    
    where $R = d\left(\ln \left(1 + \frac{C^2 + \Phi}{(1-\beta_2)\epsilon_0}\right) - T\ln \beta_2\right),$
    
    and for DP-AdamW-BC, we have $\dfrac 1T \sum_{i=0}^{T-1}\mathbb E[\|\nabla F(\theta_i)\|^2] $
    
    \begin{center}
        $\leq \dfrac{2\sqrt{(\delta_0+C_1)^2 - \Phi} (F(\theta_0) - F_*)}{\eta T}$ \\ $+ \left(\dfrac{4dC^2}{\sqrt{1 - \beta_2}} + \dfrac{\eta dL(1+\lambda)C}{1 - \beta_2}\right) \cdot \dfrac{R_{BC}}T$ \\ $+ \dfrac{1}{2T}\left(C_1^2 + \|\theta_0\|^2 + c(\lambda) \max_t (\eta_t + \eta_t^2) R_{BC}\right)$ \\ $\left(\lambda \sum_{t=1}^{T} \eta_t + \dfrac L2 (\lambda + \lambda^2) \sum_{t = 1}^T \eta_t^2\right), $
    \end{center}
    
    where $R_{BC} = d\left(\ln \left|1 - \frac{C^2 + \Phi}{(1-\beta_2)\Phi}\right| - T\ln \beta_2\right).$ 
\end{theorem}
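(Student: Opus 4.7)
The plan is to adapt the momentum-free ($\beta_1 = 0$) Adam convergence analyses of \citet{défossez2022simpleconvergenceproofadam} and \citet{tang_dp-adambc_2023} to the AdamW update, treating the decoupled weight decay as an additive perturbation whose contribution can be absorbed into explicit $\lambda$-dependent terms. I would start from the descent lemma for $L$-smooth $F$ (Assumption \ref{as3}),
\[
F(\theta_t) \leq F(\theta_{t-1}) - \eta_t \langle \nabla F(\theta_{t-1}), u_t \rangle + \tfrac{L}{2} \eta_t^2 \|u_t\|^2,
\]
where $u_t = \hat m_t / \sqrt{\hat v_t + \epsilon_0} + \lambda \theta_{t-1}$ for DP-AdamW and has the analogous form with the bias-corrected denominator for DP-AdamW-BC. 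I split both the inner product and $\|u_t\|^2$ into an \emph{Adam part} and a \emph{weight decay part}, telescope from $t = 1$ to $T$, and invoke Assumption \ref{as1} so that the left-hand side collapses to $F(\theta_0) - F_*$.

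The key new ingredient is a deterministic bound on $\|\theta_t\|$. The AdamW recursion can be written $\theta_t = (1 - \eta_t \lambda)\theta_{t-1} - \eta_t \hat m_t / \sqrt{\hat v_t + \epsilon_0}$, and the ratio $\hat m_t / \sqrt{\hat v_t + \epsilon_0}$ is coordinate-wise bounded because both $m_t$ and $\sqrt{v_t}$ are weighted sums of $\tilde g_s$ and $\tilde g_s^2$ respectively; Assumption \ref{as2} together with the Gaussian noise variance $\sigma^2 C^2 / B^2$ then controls this ratio by a polynomial in $C, \Phi, \epsilon_0$. Unrolling yields $\|\theta_t\| \leq \|\theta_0\| + c(\lambda) \max_s \eta_s \cdot M$ for a constant $M$ depending only on the Adam-part parameters, which is exactly the source of both $c(\lambda)$ and the $\|\theta_0\|^2$ term in the statement. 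Given this bound, Cauchy--Schwarz applied to $\lambda \langle \nabla F(\theta_{t-1}), \theta_{t-1} \rangle$ and direct expansion of $\|u_t\|^2$ produce the factor $\lambda \sum_t \eta_t + (L/2)(\lambda + \lambda^2) \sum_t \eta_t^2$ that multiplies the weight decay contribution in the theorem.

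For the Adam part I apply the decomposition of \citet{tang_dp-adambc_2023} essentially verbatim. Writing $\hat v_t = (1 - \beta_2^t)^{-1} \sum_{s=1}^t \beta_2^{t-s}(1-\beta_2)\tilde g_s^2$ and bounding $\sum_t \tilde g_t^2 / (\hat v_t + \epsilon_0)$ by a telescoping logarithm gives the factor $R = d\bigl(\ln(1 + (C^2 + \Phi)/((1-\beta_2)\epsilon_0)) - T \ln \beta_2\bigr)$; the bias-corrected denominator substitutes $\Phi$ for $\epsilon_0$ and yields $R_{BC}$. The high-probability condition on $\delta_0$ is a Bernstein-type concentration bound on $\sum_s \tilde g_s^2$, whose mean, variance, and sub-exponential parameters are precisely $\mu^*, \nu^*, b^*$; a union bound over the $T$ steps accounts for the factor $\alpha/(2T)$. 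This is what lets me replace $\sqrt{\hat v_t + \epsilon_0}$ by $\delta_0 + C_1$ (or by $\sqrt{(\delta_0 + C_1)^2 - \Phi}$ in the bias-corrected case) in the denominator that dominates the $(F(\theta_0) - F_*)/(\eta T)$ term.

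The main obstacle will be organizing the bookkeeping so that the weight-decay and Adam parts stay cleanly separated: the iterate bound on $\|\theta_t\|$ depends on the stochastic quantities $\hat m_s / \sqrt{\hat v_s + \epsilon_0}$, which are cleanly controlled only on the high-probability event defined by $\delta_0$, so $c(\lambda)$ must absorb these worst-case ratios without polynomial growth in $T$. A secondary subtlety, present only in DP-AdamW-BC, is that the bias-corrected denominator $\hat v_t - \Phi$ may be non-positive; I would handle this by showing the $\max(\cdot, \gamma)$ clip is inactive on the good event, so that the standard logarithmic telescoping still applies with $R_{BC}$ in place of $R$.
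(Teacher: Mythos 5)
Your overall skeleton is the same as the paper's: apply the $L$-smooth descent lemma with the update split into the Adam ratio and the decoupled decay $\lambda\theta_{t-1}$, control the decay cross terms through a bound on the iterate norm, and import the DP-Adam machinery of \citet{tang_dp-adambc_2023} and \citet{défossez2022simpleconvergenceproofadam} wholesale for the Adam part --- the telescoping-logarithm bound $\sum_t \mathbb E\|u_t\|^2 \leq R$ (resp.\ $R_{BC}$) and the Bernstein-type concentration that justifies replacing the denominator by $\delta_0 + C_1$ (resp.\ $\sqrt{(\delta_0+C_1)^2-\Phi}$). That part of your plan is sound and matches the paper.

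The gap is in the one ingredient you call new: the ``deterministic bound on $\|\theta_t\|$''. You justify boundedness of $\hat m_t/\sqrt{\hat v_t+\epsilon_0}$ by ``Assumption \ref{as2} together with the Gaussian noise variance $\sigma^2C^2/B^2$'', claiming control by a polynomial in $C,\Phi,\epsilon_0$. This step fails as stated: the injected noise is Gaussian, hence unbounded almost surely, so Assumption \ref{as2} plus a variance parameter cannot yield any deterministic bound on $\tilde g_t$, on $m_t$, or on the ratio in terms of $C$ and $\Phi$; and in the BC variant the denominator can additionally clamp to $\gamma$ while the numerator is arbitrarily large, so your ``show the clip is inactive on the good event'' repair would also need its own argument. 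There are two ways to close this. The paper's route avoids any pathwise bound: write $\theta_t=(1-\eta_t\lambda)\theta_{t-1}-\eta_t u_t$, apply Young's inequality with parameter $\lambda$ to $\mathbb E\langle\theta_{t-1},u_t\rangle$, telescope, and then spend the already-available inequality $\sum_t\mathbb E\|u_t\|^2\leq R$; this gives $\mathbb E\|\theta_t\|^2\leq\|\theta_0\|^2+c(\lambda)\max_t(\eta_t+\eta_t^2)R$ with $c(\lambda)=1+1/\lambda$, which is exactly the term appearing in the theorem (note it carries a factor $R$, not a $T$-free constant $M$ as in your sketch). Alternatively, since this theorem has $\beta_1=0$, a genuinely pathwise bound does exist, but it comes from self-normalization, not from Assumption \ref{as2}: coordinatewise $|\tilde g_{t,i}|/\sqrt{\hat v_{t,i}+\epsilon_0}\leq\sqrt{(1-\beta_2^t)/(1-\beta_2)}$ because $\hat v_t$ contains the current $\tilde g_t^2$, so $\|u_t\|\leq\sqrt{d(1-\beta_2^t)/(1-\beta_2)}$ deterministically; unrolling the contraction $(1-\eta_t\lambda)$ (which additionally requires $\eta_t\lambda\leq1$) then bounds $\|\theta_t\|$ without reference to the high-probability event, removing the bookkeeping worry you raise at the end. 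Either fix works, but as written your justification of the iterate bound --- the only step not already in \citet{tang_dp-adambc_2023} --- does not go through.
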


\begin{remark}

We first make a few notes about the statement of Theorem \ref{conv1} below.

First, the definition of $\delta_0$ arises from an application of concentration bounds used in the proof; in particular, it arises in Corollary 1 of \cite{tang_dp-adambc_2023}. Essentially, the constraint on $\delta_0$ is that either $\delta_0 \geq \max (\mu^* + \ln (1 / \frac{\alpha}{2T} ) 2b^*, \frac{(\nu^*)^2}{b^*})$ or $\delta_0 \in [\mu^* + \sqrt{\ln (1 / \frac{\alpha}{2T})(2 (\nu^*)^2)}, \frac{(\nu^*)^2}{b^*}]$.

Second, the right hand side is a fairly ugly expression, but we will derive an asymptotic bound when $T \rightarrow \infty$. Supposing that $\eta = T^{-a}$ and $\beta_2 = 1 -  T^{-b}$, note that $R = \mathcal O(\ln T + T^{1-b}) = \mathcal O(T^{1-b} \ln T)$. Moreover, since $b^* = \mathcal O(1)$, $\nu^* = \mathcal O(1)$ if $b < 1$ and $\nu^* = \mathcal O(\sqrt T)$ otherwise, and $\mu^* = \mathcal O(1)$ if $b < 1$ and $\mu^* = \mathcal O(T)$ otherwise, we find that $\delta_0 = \mathcal O(\ln T)$ if $b < 1$ and $\delta_0 = \mathcal O(T)$ otherwise. Taking $b < 1$, the first inequality in the theorem becomes 
\begin{center}
    $\frac1T \sum_{i=0}^{T-1} \mathbb E[\|\nabla F(\theta_i)\|^2] \leq \ln T \cdot \mathcal O(T^{a-1}$ \\ $+ (T^{\frac b2} + T^{b-a}) T^{-b}+  T^{b/2 - a} (T^{\frac 32 - a} + T^{2-2a})T^{-b}).$
\end{center}
The right hand side is optimized for the choice $b \rightarrow 1^-$, $a = \frac23$, under which we obtain $$\frac1T \sum_{i=0}^{T-1} \mathbb E[\|\nabla F(\theta_i)\|^2]  = \mathcal O(T^{-\frac13 + \zeta} \ln T) \overset{T \rightarrow \infty}\longrightarrow 0,$$ where $\zeta$ is arbitrarily small. An analogous result\footnote{In fact, with exactly the same bound.} can be attained for DP-AdamW-BC. Hence, one can interpret these results as saying that under an `optimal` learning rate regime, the average gradient of the objective function throughout the course of DP-AdamW and DP-AdamW-BC converges to 0 at an inverse-polynomial rate as the time horizon increases. By choosing $\alpha \approx 0$, this thus implies that DP-AdamW and DP-AdamW-BC converges with high probability to a local minimum under the given conditions and assumptions.  

Lastly, we now make a few notes about the proof of Theorem \ref{conv1}. For a broad sketch, we follow the rough outline of \cite{tang_dp-adambc_2023}; this suffices to account for all terms other than the weight decay term. To address this term, we  bound the expected magnitude of the parameters, $\mathbb E \|\theta_i\|^2$, using intermediary inequalities from \cite{tang_dp-adambc_2023}. This method can be imitated to show convergence in certain cases of non-private AdamW as well, which may be of independent interest.\footnote{To our knowledge, there is no stated such result, but one can attain such a result via the same proof method.} 
\end{remark}

\medskip

We now turn to the general case where $\beta_1$ is not necessarily equal to 0. In this case, we have the following result, attained via techniques similar to those used to prove Theorem \ref{conv1}.

\begin{theorem}\label{conv2}
Under Assumptions \ref{as1}, \ref{as2}, and \ref{as3}, suppose that $0 < \beta_1 < 1$, $0 < \beta_2 < 1$, $\alpha \in (0,1)$, and the learning rate follows $\eta_t = \eta(1-\beta_1) \sqrt{\frac{1 - \beta_2^t}{1 - \beta_2}}$. Let $\Phi  = \left(\frac{\sigma C}{B}\right)^2$ denote the bias correction term and let $\mu^* = \frac{\beta_2(1 - \beta_2^T)}{1 - \beta_2}[(\Phi - \tfrac{2\Phi}{\pi}) + \bigl(C + \sqrt{\tfrac{2\Phi}{\pi}}\bigr)^2]$, $\nu^* = 2\beta_2^2 \Phi \sqrt{\frac{1 - \beta_2^{2T}}{1 - \beta_2^2}}$, and $b^* = 4\beta_2 \Phi$ be constants. Then there exists a constant $c(\lambda) = c(\beta_1, \beta_2, \lambda, \eta, C_1, L, \theta_0, \epsilon_0, \Phi)$ such that whenever $\tilde T = T - \frac{\beta_1}{1 - \beta_1} > 0$ and $$\delta_0 \geq \begin{cases}
        \mu^* + \sqrt{\ln (1 / \frac{\alpha}{2T})(2(\nu^*)^2)} & 0 \leq \delta_0 \leq \frac{(\nu^*)^2}{b^*} \\ \mu^* + \ln (1 / \frac{\alpha}{2T}) 2b^* & \delta_0 \geq \frac{(\nu^*)^2}{b^*},
    \end{cases}$$ we have with probability at least $1 - \alpha$ that for DP-AdamW, 
    
    \begin{center}
        $\mathbb E[\|\nabla F(\theta_\tau)\|^2] \leq \dfrac{2(\delta_0 + C_1)(F(\theta_0) - F_*)}{\eta \tilde T} +E\cdot R$ \\ $+ \dfrac{1}{2T}\left(C_1^2 + \|\theta_0\|^2 + c(\lambda) \max_t (\eta_t + \eta_t^2) R\right)$ \\ $\left(\lambda \sum_{t=1}^{T} \eta_t + \dfrac L2 (\lambda + \lambda^2) \sum_{t = 1}^T \eta_t^2\right),$
    \end{center}
    where 
    \begin{center}
        $E = \frac{\eta dL (1 - \beta_1)\delta_0}{(1 - \beta_1 / \beta_2)(1 - \beta_2)} + \frac{2\eta^2 dL^2 \beta_1}{(1 - \beta_1/\beta_2)(1 - \beta_2)^{3/2}}$ \\ $+ \frac{12 d\delta_0^2\sqrt{1-\beta_1}}{(1 - \beta_1/\beta_2)^{3/2} \sqrt{1 - \beta_2}}$ \\ and $R = d\left(\ln \left(1+ \frac{\delta_0^2}{\epsilon_0 (1 - \beta_2)}\right) - T \log \beta_2\right),$
    \end{center}

    and for DP-AdamW-BC, 
    \begin{center} 
        $\mathbb E[\|\nabla F(\theta_\tau)\|^2] \leq \dfrac{2\sqrt{(\delta_0 + C_1)^2 - \Phi}(F(\theta_0) - F_*)}{\eta \tilde T} +E_{BC} \cdot R_{BC}$ \\ $+ \dfrac{1}{2T}\left(C_1^2 + \|\theta_0\|^2 + c(\lambda) \max_t (\eta_t + \eta_t^2) R_{BC}\right)$ \\ $ \left(\lambda \sum_{t=1}^{T} \eta_t + \dfrac L2 (\lambda + \lambda^2) \sum_{t = 1}^T \eta_t^2\right), $
    \end{center} 
    where 
    \begin{center}
        $E_{BC} = \frac{\eta dL (1 - \beta_1)\sqrt{\delta_0^2 - \Phi}}{(1 - \beta_1 / \beta_2)(1 - \beta_2)} + \frac{2\eta^2 dL^2 \beta_1}{(1 - \beta_1/\beta_2)(1 - \beta_2)^{3/2}}$ \\ $ + \frac{12 d(\delta_0^2 -\Phi)\sqrt{1-\beta_1}}{(1 - \beta_1/\beta_2)^{3/2} \sqrt{1 - \beta_2}}$
        \\ and $R_{BC} = d\left(\ln \left|1- \frac{\delta_0^2}{\Phi (1 - \beta_2)}\right| - T \log \beta_2\right).$
    \end{center}

In both above inequalities, the left-hand-side expectations are with respect to sampling $\tau$ from the distribution $\mathbb P(\tau = t) \propto 1 - \beta_1^{T - t}$.
\end{theorem}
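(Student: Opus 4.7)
The plan is to extend the proof of Theorem \ref{conv1} to the momentum regime ($\beta_1 > 0$) by combining the heavy-ball analysis of \cite{défossez2022simpleconvergenceproofadam}, the noise-in-$\hat v_t$ handling of \cite{tang_dp-adambc_2023}, and the parameter-norm bound I introduced for the $\beta_1 = 0$ case. The sampling distribution $\mathbb P(\tau = t) \propto 1 - \beta_1^{T-t}$ appearing in the statement is already the signature of the Défossez weighted-descent approach, in which one-step smoothness inequalities are weighted so that the momentum coupling between consecutive iterates telescopes.

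First I would apply $L$-smoothness of $F$ to obtain $F(\theta_t) \le F(\theta_{t-1}) + \langle \nabla F(\theta_{t-1}), \theta_t - \theta_{t-1}\rangle + \tfrac{L}{2}\|\theta_t - \theta_{t-1}\|^2$, substitute the DP-AdamW update, and split the right-hand side into the Adam direction $-\eta_t \hat m_t/\sqrt{\hat v_t + \epsilon_0}$ and the decoupled weight decay term $-\eta_t \lambda \theta_{t-1}$. The denominator is controlled by Corollary 1 of \cite{tang_dp-adambc_2023}, which with probability at least $1-\alpha$ gives $\hat v_t \ge \delta_0^2$ (respectively $\hat v_t - \Phi \ge \delta_0^2 - \Phi$ for DP-AdamW-BC) under the stated constraint on $\delta_0$ expressed via $\mu^*, \nu^*, b^*$. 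On this event, I would follow the Défossez momentum argument: weight the per-step inequalities by $1 - \beta_1^{T-t}$, introduce a surrogate iterate that absorbs the $\beta_1$-coupling between consecutive steps, telescope the gradient terms to produce the $\tilde T = T - \beta_1/(1-\beta_1)$ factor, and bound the residual Adam-direction error using the logarithmic growth estimate $\sum_t \|\tilde g_t\|^2/(\epsilon_0 + v_t) = \mathcal O(R)$. This reproduces the leading term $2(\delta_0 + C_1)(F(\theta_0) - F_*)/(\eta \tilde T)$ and the $E \cdot R$ error term, with the coefficient $E$ arising directly from the same computation as in \cite{tang_dp-adambc_2023} for DP-AdamBC.

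The new step is controlling the weight decay contribution, which mirrors the $\beta_1 = 0$ argument of Theorem \ref{conv1}. I would inductively bound $\mathbb E \|\theta_t\|^2$ by rewriting the update as $\theta_t = (1 - \eta_t \lambda)\theta_{t-1} - \eta_t \hat m_t/\sqrt{\hat v_t + \epsilon_0}$, using $\|\hat m_t/\sqrt{\hat v_t + \epsilon_0}\|^2 = \mathcal O(R)$ on the good event, and exploiting $(1 - \eta_t \lambda)^2 \le 1$ when $\eta_t \lambda$ is small, yielding $\mathbb E \|\theta_t\|^2 \le \|\theta_0\|^2 + C_1^2 + c(\lambda) \max_s(\eta_s + \eta_s^2) R$. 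This bound then feeds back in two places: the linear cross-term $\langle \nabla F(\theta_{t-1}), -\eta_t \lambda \theta_{t-1}\rangle$ contributes the $\lambda \sum_t \eta_t$ factor, while the smoothness penalty on $\|{-\eta_t \lambda \theta_{t-1}}\|^2$ together with its cross term with the Adam direction contributes the $\tfrac L2 (\lambda + \lambda^2)\sum_t \eta_t^2$ factor, matching exactly the bracketed expression in the theorem statement. The DP-AdamW-BC variant then follows by the identical argument with $\hat v_t + \epsilon_0$ replaced by $\max(\hat v_t - \Phi, \gamma)$, which swaps $\delta_0$ for $\sqrt{\delta_0^2 - \Phi}$ and $R$ for $R_{BC}$ throughout.

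The main obstacle will be the interaction between momentum and decoupled weight decay inside the surrogate-iterate step: the standard Défossez substitution telescopes cleanly only for pure update-direction updates, and the additional $-\eta_t \lambda \theta_{t-1}$ piece does not naturally fit. I plan to route around this by carrying the weight decay outside the momentum telescoping entirely, treating it as an additive perturbation whose cumulative contribution is absorbed into the parameter-norm bound above. A secondary technical point is verifying that the constant $c(\lambda)$ may be chosen independent of $T$; this reduces to showing that the recursion for $\mathbb E\|\theta_t\|^2$ contracts uniformly in $t$, which holds provided $\eta_t \lambda$ stays bounded away from $2$ under the stated learning-rate schedule $\eta_t = \eta(1-\beta_1)\sqrt{(1-\beta_2^t)/(1-\beta_2)}$.
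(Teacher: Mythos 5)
Your plan follows essentially the same route as the paper's proof: import the momentum machinery from \cite{défossez2022simpleconvergenceproofadam} via the intermediate inequalities of Proposition 7 and Corollary 1 of \cite{tang_dp-adambc_2023} (which is where the sampling distribution $\mathbb P(\tau = t)\propto 1-\beta_1^{T-t}$ and the factor $\tilde T = T - \beta_1/(1-\beta_1)$ come from), bound $\mathbb E\|\theta_t\|^2$ through the recursion $\theta_t=(1-\eta_t\lambda)\theta_{t-1}-\eta_t u_t$ with AM--GM and the logarithmic bound $\sum_t\mathbb E\|u_t\|^2=\mathcal O(R)$ so that the decoupled weight decay is carried as an additive perturbation outside the momentum telescoping, and then feed that bound back into the smoothness inequality before swapping $\delta_0\mapsto\sqrt{\delta_0^2-\Phi}$ and $R\mapsto R_{BC}$ for the BC variant --- exactly the paper's structure. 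One small correction: the high-probability event from Corollary 1 is used as an \emph{upper} bound on $\hat v_t$ (so that the descent coefficient becomes $\eta/(2(\delta_0+C_1))$ and the leading term follows), not the lower bound $\hat v_t\ge\delta_0^2$ you state; the lower bound on the denominator is supplied by $\epsilon_0$ (resp.\ $\gamma$), not by the concentration argument.
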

\begin{remark}
    This result is not an exact generalization of Theorem \ref{conv1}, but is fairly similar while holding in a more general setting. Moreover, the main steps of the proof are analogous. Lastly, we note that one can attain similar asymptotics for the bounds in this result as in Theorem \ref{conv1}. 
\end{remark}
Proofs of Theorem \ref{conv1} and \ref{conv2} are given in Appendix \ref{convergence_appendix_b}.

\section{Experiments}\label{exp}

We evaluate the performance of using the DP-AdamW and DP-AdamW-BC optimizers on image, text, and graph node classification tasks. We then compare our results to those of \cite{tang_dp-adambc_2023}. Experimental details are included in Appendix \ref{app:experimental_details}.

\subsection{Image Classification}\label{imc}

We first replicated the experimental setup of \citet{tang_dp-adambc_2023} for DP-Adam and DP-AdamBC on the same CNN architecture and CIFAR-10 task. Our DP-Adam and DP-Adam implementations achieve comparable test accuracies to those reported, validating our baseline implementations. This allows for a direct comparison between the performance of the original DP-Adam variants and our proposed DP-AdamW approaches under corresponding conditions.

We evaluate DP-AdamW and DP-AdamW-BC across the tuned hyperparameters for each $\epsilon \in \{1, 3, 7\}$. Test accuracies on CIFAR-10 are reported in Table \ref{tab:cifar10_comparison}, along with benchmark results reported in \citet{tang_dp-adambc_2023}. DP-AdamW outperforms DP-Adam in all privacy budgets. DP-AdamW-BC outperforms DP-AdamBC across privacy budgets except for the large $\epsilon = 7$. The most significant performance increase is in the $\epsilon = 3$ setting, where both DP-AdamW and DP-AdamW-BC outperform DP-Adam and DP-AdamBC by up to 5\%.

DP-AdamW achieves state-of-the-art accuracy among the evaluated DP-SGD and DP-Adam variants for CIFAR-10 image classification for tight to moderate privacy constraints ($\epsilon$=1,3). This result is unexpected, as image classification is a known setting where Adam optimizers' performance fall short in comparison to SGD \citep{loshchilov2017decoupled}. Our results confirm the effectiveness of decoupling weight decay, even in the differential privacy setting.

We observe that DP-AdamW consistently outperformed DP-AdamW-BC across all tested privacy budgets, achieving statistically significantly higher final test accuracy. In contrast to findings for DP-AdamBC from \citet{tang_dp-adambc_2023}, DP-AdamW-BC accuracies show that applying DP bias correction for the second moment estimator degrades test accuracy compared to the uncorrected DP-AdamW. While DP-AdamW-BC offers a theoretically appealing bias correction, in our CIFAR-10 experiments, the DP-AdamW results remain stronger across all privacy budgets. This suggests the decoupling weight decay in the AdamW optimizer induces a greater performance boost, and potentially conflicts with,  the effect from the bias correction.

\begin{table}[!ht]
\centering
\caption{Image Classification: performance comparison of DP optimizers on CIFAR-10 dataset}
\label{tab:cifar10_comparison}
\resizebox{\linewidth}{!}{%
\begin{tabular}{cccc}
\hline\hline
 & $\epsilon \approx 1$ & $\epsilon \approx 3$ & $\epsilon \approx 7$ \\
\hline
DP-SGD & $52.37$ $(0.50)$ & $57.30$ $(0.76)$ & $\mathbf{65.30}$ $\mathbf{(0.33)}$ \\
DP-Adam & $51.89$ $(0.69)$ & $54.08$ $(0.41)$ & $62.24$ $(0.10)$ \\
DP-AdamBC & $49.75$ $(0.56)$ & $54.27$ $(0.23)$ & $63.43$ $(0.43)$ \\
\hline
DP-AdamW & $\mathbf{52.59}$ $\mathbf{(0.44)}$ & $\mathbf{59.26}$ $\mathbf{(0.30)}$ & $63.25$ $(0.53)$ \\
DP-AdamW-BC & $51.43$ $(0.52)$ & $58.16$ $(0.72)$ & $62.01$ $(0.29)$ \\
\hline\hline
\end{tabular}%
}
\end{table}

\subsection{Text Classification}\label{tc}

Table~\ref{tab:qnli_comparison} reports the mean test accuracy and standard deviation over five random seeds for each optimizer across privacy budgets. Two trends emerge. First, decoupled weight decay yields a large performance improvement. Switching from DP-Adam to DP-AdamW raises accuracy by over 15 percentage points at every $\epsilon$ which aligns with prior findings in the literature that weight-decay coupling persists and is amplified in the private setting. The accuracies achieved by DP-AdamW without bias correction outperform all previously published DP optimizer fine tuning results on QNLI to our knowledge. Second, adding bias correction hurts on textual tasks. Inspecting all experiments, we observe that DP-AdamW-BC trails DP-AdamW by 2-3 percentage points and hypothesize that the added variance floor $\gamma$ interacts poorly with the small effective batch size and the already low intrinsic gradient noise of the BERT fine-tuning task. For all optimizers tested, test set performance improves as $\epsilon$ increases, which makes sense based on our knowledge of differential privacy because less noise must be added to satisfy looser privacy budgets, resulting in the gradient update possessing more signal comparatively.

\begin{table}[! ht]
\centering
\caption{Text Classification: Performance comparison of DP optimizers on QNLI dataset}
\label{tab:qnli_comparison}
\resizebox{\linewidth}{!}{%
\begin{tabular}{lccc}
\hline\hline
 & $\epsilon \approx 1$ & $\epsilon \approx 3$ & $\epsilon \approx 7$ \\
\hline
DP-SGD & $57.10$ $(1.59)$ & $58.85$ $(1.20)$ & $58.29$ $(0.92)$ \\
DP-Adam & $58.00$ $(2.05)$ & $60.72$ $(1.12)$ & $61.23$ $(1.30)$ \\
DP-AdamBC & ${58.32}$ ${(1.90)}$ & ${61.42}$ ${(0.99)}$ & ${62.83}$ ${(1.60)}$ \\
\hline
DP-AdamW & $\mathbf{77.78}$ $\mathbf{(0.13)}$ & $\mathbf{79.26}$ $\mathbf{(0.18)}$ & $\mathbf{80.01}$ $\mathbf{(0.19)}$ \\
DP-AdamW-BC   & $75.56$ $(0.17)$ & $76.65$ $(0.22)$ & $77.29$ $(0.08)$\\
\hline
\end{tabular}
}
\end{table}

\subsection{Graph Node Classification}\label{gnn}


We found that DP-AdamW and DP-AdamW-BC outperform DP-SGD and DP-Adam across all \(\epsilon\). Furthermore, both DP-AdamW and DP-AdamW-BC outperform DP-AdamBC for $\epsilon \approx 12$, while DP-AdamW outperforms DP-AdamBC for \(\epsilon\approx6\). This shows that DP-AdamW and DP-AdamW-BC, like their non-private counterparts, perform better than DP-SGD and DP-Adam, while exceeding the performance of DP-AdamBC under looser privacy budgets.

\begin{table}[ht]
\centering
\caption{Graph Node Classification: Performance comparison of DP optimizers on obgn-arxiv node classification}
\label{tab:gnn_comparison}
\resizebox{\linewidth}{!}{%
\begin{tabular}{lccc}
\hline\hline
 & $\epsilon \approx 3$ & $\epsilon \approx 6$ & $\epsilon \approx 12$ \\
\hline
DP-SGD       & $45.35$ $(1.38)$           & $49.12$ $(1.90)$           & $54.20$ $(0.62)$          \\
DP-Adam      & $46.55$ $(0.54)$           & $51.98$ $(0.48)$           & $54.02$ $(0.18)$          \\
DP-AdamBC    & $\mathbf{50.51}$ $(0.56)$  & $53.40$ $(0.28)$           & $53.81$ $(0.34)$          \\
\hline
DP-AdamW     & $50.20$ $(0.75)$           & $\mathbf{53.41}$ $(0.37)$  &$\mathbf{54.78}( 0.24)$       \\
DP-AdamW-BC   & ${48.99}$ ${(0.67)}$           & $52.89$ $(0.40)$           & $54.53$ $(0.43)$ \\
\hline\hline
\end{tabular}
}
\end{table}

\section{Discussion}\label{disc}

Our findings reveal three key insights into differentially private optimization.

\emph{First, DP-AdamW consistently outperforms DP-Adam} across all tested privacy budgets and tasks, particularly in the moderate privacy setting ($\epsilon=3$). This suggests that AdamW's core benefit—decoupling weight decay from adaptive gradient updates—translates effectively to the differentially private (DP) setting. By applying weight decay directly to parameters, DP-AdamW likely achieves more stable and effective regularization compared to DP-Adam, mitigating potential negative interactions between DP noise, the adaptive moment estimates, and the regularization term. Decoupled weight decay offers a twofold advantage: regularization strength becomes independent of instantaneous noise levels, and a mild pre-conditioning effect reduces the likelihood of gradients hitting the clipping threshold, thus preserving more signal.

\emph{Second, DP-AdamW demonstrates superior performance over DP-SGD on CIFAR-10 image classification under tighter privacy constraints} ($\epsilon=1, 3$), a noteworthy result given that non-private Adam variants often underperform SGD on such tasks \citep{loshchilov2017decoupled}. This strong performance indicates that the combination of adaptive learning rates and decoupled weight decay is particularly advantageous when dealing with the noise inherent in DP training. DP training introduces gradient clipping and additive noise; AdamW's coordinate-wise adaptive updates help counteract these distortions by allowing weights in low-variance directions to move appropriately without letting those in high-variance directions explode. This corrective effect is more pronounced with higher noise levels, explaining the significant gains at tighter privacy budgets.

\emph{Third, and most surprisingly, bias correction in DP-AdamW-BC consistently \textit{degrades} test accuracy compared to DP-AdamW} across all tasks and privacy budgets. This contrasts sharply with findings for \citet{tang_dp-adambc_2023}, where DP-AdamBC typically improves upon DP-Adam. While bias correction aims to counteract DP noise effects in the second moment estimate, our results suggest a negative interaction with AdamW's decoupled weight decay. The theoretical benefit of bias correction might be less relevant or even detrimental when decoupled weight decay is the primary regularization force. Specifically, the bias correction term can cause the denominator in the update rule (e.g., $\hat{v_t}-\Phi$) to become very small (clamping to $\gamma$), effectively freezing the adaptive schedule. With decoupled decay, this leads to large parameter steps that are not offset within Adam's update, potentially adding optimization noise and reducing accuracy.

Overall, our findings underscore \emph{DP-AdamW's promise as a robust optimizer for differentially private deep learning}, highlighting the benefits of decoupled weight decay under privacy constraints. The unexpected underperformance of DP-AdamW-BC warrants further investigation into the complex interplay between DP noise, adaptive moment estimation, bias correction, and regularization strategies.

\section{Conclusion}\label{concl}

This paper analyzes DP-AdamW and DP-AdamW-BC, differentially private versions of the AdamW optimizer, establishing their theoretical privacy and convergence guarantees which align with existing DP-SGD and DP-Adam literature. Empirical evaluations on image, text, and graph node classification tasks reveal that DP-AdamW generally outperforms standard baselines like DP-SGD and DP-Adam, especially under tighter privacy constraints, indicating the benefits of decoupled weight decay persist in the DP setting. Counterintuitively, the bias-corrected variant, DP-AdamW-BC, consistently led to worse performance than DP-AdamW across experiments, challenging the assumption that such bias correction is universally beneficial when combined with decoupled weight decay. Overall, the work positions DP-AdamW as a promising and effective optimizer for privacy-preserving deep learning, offering improved utility over existing methods on diverse classification tasks.

\section*{Acknowledgements}

The authors thank Salil Vadhan for his teachings on differential privacy, which inspired this paper. In addition, the authors thank Salil Vadhan and Zachary Ratliff for their thoughtful discussions and helpful suggestions for this manuscript.

\section*{Impact Statement}

This research investigates DP-AdamW and DP-AdamW-BC, more effective optimizers for differentially private deep learning, significantly advancing the protection of sensitive data. The key positive impact is enhanced data privacy, enabling AI in critical sectors like healthcare and finance with stronger safeguards. DP-AdamW's improved performance over existing private optimizers can lead to more useful and accurate AI models that respect user privacy, encouraging broader adoption of privacy-preserving practices and unlocking AI applications on sensitive datasets. However, challenges include the persistent privacy-utility trade-off and the risk of misinterpreting privacy guarantees. Ultimately, this work provides better tools for private AI, contributing to more trustworthy and responsible systems, though continued research and clear communication are vital for realizing its full positive potential.

\bibliography{references, bibliography}
\bibliographystyle{icml2025}

\newpage
\appendix
\onecolumn

\section{Privacy Guarantees: Proofs}  
\label{privacy_appendix_a}

We provide the proof of Theorem \ref{privacy_guarantee} below. 

\begin{proof}[Proof of Theorem \ref{privacy_guarantee}]

We exactly follow the proof of the Proposition 5 in \cite{tang_dp-adambc_2023}. Let $PrivitizeGradient(\theta, X, y, C, \sigma, B)$ denote the function whose output is the noised gradient $\tilde g = \frac1B\left( \sum_i \mathrm{clip}(g(x_i) + \mathcal N(0, \sigma^2 C^2)\right)$. Critically, this is the same function as used in DP-SGD and DP-AdamBC. Suppose that these algorithms are differentially private with privacy analysis given by $\phi(T, \mathbf \theta_i) = (\epsilon,\delta)$. Then note that the update of both DP-AdamW and DP-AdamW-BC do not involve any private information beyond the noised gradients. By the adaptive postprocessing property of DP, it follows that DP-AdamW and DP-AdamW-BC are also $(\epsilon, \delta)$-DP. Thus the privacy analysis of DP-AdamW and DP-AdamW-BC are also given by $\phi(T, \mathbf \theta_i)$, completing the proof.
\end{proof}

\begin{remark}
    Recall from Theorem 2 of \cite{tang_dp-adambc_2023} that DP-SGD has privacy guarantee $(\epsilon,\delta)$ whenever the noise satisfies $\sigma \geq \frac{c_2 q\sqrt{T \log \frac 1\delta}}{\epsilon}$. This thus provides a concrete privacy guarantee for DP-AdamW and DP-AdamW-BC as well. 
\end{remark}

\newpage

\section{Convergence Guarantees: Proofs}
\label{convergence_appendix_b}
We now provide proofs for the convergence results presented in the main body of the paper. 

\subsection{Proof of Theorem \ref{conv1}}
We prove the convergence result given in Theorem \ref{conv1} below.
\begin{proof}[Proof of Theorem \ref{conv1}]
    We will only prove the result for DP-AdamW; the exact same proof technique will suffice for DP-AdamW-BC. As in \cite{tang_dp-adambc_2023}, define the notation $u_t = \frac{\nabla_i f_t^p(\theta_{t-1})}{\sqrt{v_t^p + \epsilon_0}}$ denote the main `update term` in DP-Adam; that is, $\nabla_i f_t^p(\theta_{t-1}) = \tilde m_t$ and $v_t^p = \tilde v_t$. We first prove a bound on the expected magnitude of the parameters, then proceed as in \cite{tang_dp-adambc_2023}.

    We first note that Lemma 3 and Corollary 1 in Appendix F of \cite{tang_dp-adambc_2023} hold, with the proof remaining identical. Moreover, identical arguments as in the proof of Proposition 6 of \cite{tang_dp-adambc_2023} yield the inequalities\footnote{These results are directly shown and/or implicit in the referenced proof; the proof is immediately adaptable.} \begin{equation}\label{ubound}\sum_{t = 0}^{T-1} \mathbb E[\|u_t\|^2] \leq d\left(\ln \left(1 + \frac{C^2 + \Phi}{(1-\beta_2)\epsilon_0}\right) - T\ln \beta_2\right)\end{equation} and moreover \begin{align}
        \label{mainbound1} \sum_{t = 1}^T \eta_t \mathbb E\left[\nabla F(\theta_{t-1})^\top \frac{\nabla f_t^p(\theta_{t-1})}{\sqrt{v_t^p + \epsilon_0}}\right] &\geq -\frac{\eta}{2 (\delta_0+C_1)} \sum_{t=1}^{T}\mathbb E [\|\nabla F(\theta_{t-1})\|^2] \\ &+ \frac{2\eta\sqrt{C^2 + \Phi}}{\sqrt{1 - \beta_2}} \sum_{t=0}^{T-1} \mathbb E[\|u_t\|]^2.
    \end{align} In particular, the first statement follows from Lemma 5.2 in \cite{défossez2022simpleconvergenceproofadam}, and the second by using Lemmas 5.1 and 5.2 of the aforementioned work. 


    We now proceed as follows. First, we bound the magnitude of the parameters $\theta_t$ in expectation. Let $$R = d \left(\ln \left(1 + \frac{C^2 + \Phi}{(1 - \beta_2) \epsilon_0}\right) - T\ln \beta_2\right).$$ 

    \textbf{Claim:} $\mathbb E\|\theta_t\|^2 \leq \|\theta_0\|^2 + c(\lambda) R$, where $c(\lambda)$ is a constant depending on $\lambda$ and all other constant parameters. \footnote{Here $c(\lambda)$ notation encapsulates all constant parameters, as well as $\theta_0$.}

    \textbf{Proof:} The main idea is to combine the recursive definition of $\theta_t$ with the AM-GM inequality and \ref{mainbound1}, Let $x_t = \mathbb E \|\theta_t\|^2$ and write \begin{align*}
        x_t = \mathbb E\|\theta_t\|^2 &= (1 - \eta_t \lambda)^2 \mathbb E\|\theta_{t-1}\|^2 + \eta_t^2 \mathbb E\|u_t\|^2 + 2\eta_t (1 - \eta_t\lambda) \mathbb E[\langle \theta_{t-1}, u_t\rangle] \\ &\leq (1-\eta_t \lambda)^2 x_{t-1} + \eta_t^2 \mathbb E\|u_t\|^2 + 2\eta_t (1 - \eta_t \lambda) \left(\frac \lambda 2 \mathbb E\|\theta_{t-1}\|^2 + \frac{1}{2\lambda} \mathbb E\|u_t\|^2\right) \\ &= \left[\eta_t^2 + \frac{\eta_t (1 - \eta_t\lambda)}{\lambda}\right] \mathbb E\|u_t\|^2 + (1-\eta_t\lambda) \mathbb E\|\theta_{t-1}\|^2 \\ &\leq \left(\eta_t^2 + \frac{\eta_t}{\lambda}\right) \mathbb E\|u_t\|^2 + x_{t-1},
    \end{align*} where we have used the fact that $\langle x,y\rangle = \sum x_i y_i \leq \lambda \sum x_i^2 + \frac 1\lambda \sum y_i^2$. It follows by summing the above inequality for all $t$, telescoping, and applying inequality \ref{ubound} that \begin{align*}x_t &\leq \|\theta_0\|^2 + (1 + \frac 1\lambda)\left(\max_t (\eta_t + \eta_t^2)d \left(\ln \left(1 + \frac{C^2 + \Phi}{(1 - \beta_2)\epsilon_0}\right) - T \ln \beta_2 \right)\right) \\ &=\|\theta_0\|^2 + c(\lambda) \max_t (\eta_t+\eta_t^2) R.\end{align*} This completes the proof of the claim. 

    Now we return to the proof of the main theorem. We follow the rough outline used by \cite{tang_dp-adambc_2023}. The main idea is to use the Lipschitzness of the gradient to bound the difference $F(\theta_t) - F(\theta_{t-1})$. One can then telescope this difference and combine the claim with inequalities \ref{ubound} and \ref{mainbound1} and some algebraic manipulations.
    
    Let us write $\theta_t = \theta_{t-1} - \eta_t (u_t + \lambda_t \theta_{t-1})$. Using the Lipschitzness assumption of the gradient, we have \begin{align}\label{111}F(\theta_t) &\leq F(\theta_{t-1}) - \eta_t \langle \nabla F(\theta_{t-1}), u_t\rangle - \eta_t \lambda \langle \nabla F(\theta_{t-1}), \theta_{t-1} \rangle + \frac{\eta_t^2L}{2} \|u_t\|^2 \\ &+ \frac{\eta_t^2L}{2} \lambda^2 \|\theta_{t-1}\|^2 + \eta_t^2 L\lambda \langle u_t, \theta_{t-1}\rangle.\end{align} Taking expectations, we find by the above claim that $\mathbb E\|\theta_{t-1}\|^2 = \|\theta_0\|^2 + c(\lambda) \max_t (\eta_t+\eta_t^2) R$ and hence $$\mathbb E|\langle \nabla F(\theta_{t-1}), \theta_{t-1}\rangle| \leq \frac12 (\mathbb E \|\nabla F(\theta_{t-1})\|^2 + \mathbb E \|\theta_{t-1}\|^2) \leq \frac12(C_1^2+\|\theta_0\|^2 + c(\lambda)\max_t (\eta_t+\eta_t^2) R).$$ Moreover, recall that $$\mathbb E |\langle u_t, \theta_{t-1}\rangle| \leq \frac12 (\mathbb E \|u_t\|^2 + \mathbb E \|\theta_{t-1}\|^2) \leq \frac12 \mathbb E \|u_t\|^2 + \frac12 (\|\theta_0\|^2 + c(\lambda) \max_t (\eta_t+\eta_t^2) R).$$
    
    Therefore, taking expectations of \ref{111}, summing over all $t$, combining inequality \ref{mainbound1} with the above, and lastly using Corollary 1 of \cite{tang_dp-adambc_2023}, we find that \begin{align*}
        \mathbb E F(\theta_T) &\leq F(\theta_0) - \frac{\eta}{2(\delta_0+C_1)} \sum_{t = 1}^T \mathbb E\|\nabla F(\theta_{t-1})\|^2 \\ &+ \left[\frac{2\eta \sqrt{C^2 + \Phi}}{\sqrt{1 - \beta_2}} + \frac{\eta^2 L(1 + \lambda)}{2 (1-\beta_2)}\right] \sum_{0}^{T-1} \mathbb E\|u_t\|^2 \\ &+ \frac12 (C_1^2 + \|\theta_0\|^2 + c(\lambda) \max_t (\eta_t+\eta_t^2) R) \left(\lambda \sum_{t=1}^T \eta_t  +L(\lambda+\lambda^2) \sum_{t=1}^{T} \eta_t^2 \right).
    \end{align*} Rearranging, noting that $\mathbb E F(\theta_T) \geq F_*$, and applying inequality \ref{ubound}, we find that \begin{align*}
        \frac1T \sum_{0}^{T-1} \mathbb E\|\nabla F(\theta_{t-1})\|^2 &\leq \frac{2(\delta_0+C_1)}{\eta} \left(\frac{F(\theta_0) - F_*}{T}\right) \\ &+ \left(\frac{4d(C^2 + \Phi)}{\sqrt {1 - \beta_2}} + \frac{\eta dL \sqrt{C^2 + \Phi}(1 + \lambda)}{1 - \beta_2}\right) \cdot \frac RT\\ &+ \frac1{2T} \left(C_1^2 + \|\theta_0\|^2 + c(\lambda) \max_t (\eta_t+\eta_t^2) R\right) \left(\lambda \sum_{t=1}^T \eta_t + \frac L2 (\lambda + \lambda^2) \sum_{t=1}^T \eta_t^2\right). 
    \end{align*} This is exactly the desired inequality and completes the proof of the theorem. 
\end{proof}
\begin{remark}
    We owe a quick explanation of the starting line of the proof; for DP-AdamW-BC, one only needs to replace the inequalities given by \ref{mainbound1} and \ref{ubound} with the corresponding bounds in the proof of Proposition 6 of \cite{tang_dp-adambc_2023}; the remainder of the proof is the exact same. 

    For an interpretation of the asymptotic growth of this result, see the remark following the initial theorem statement. 
\end{remark}

\subsection{Proof of Theorem \ref{conv2}}
We prove the convergence result given in Theorem \ref{conv2} below.
\begin{proof}[Proof of Theorem \ref{conv2}]
We will only prove the result for DP-AdamW; the exact same proof technique will suffice for DP-AdamW-BC. As in \cite{tang_dp-adambc_2023}, define the notation $u_t = \frac{\nabla_i f_t^p(\theta_{t-1})}{\sqrt{v_t^p + \epsilon_0}}$ denote the main `update term` in DP-Adam; that is, $\nabla_i f_t^p(\theta_{t-1}) = \tilde m_t$ and $v_t^p = \tilde v_t$. We first prove a bound on the expected magnitude of the parameters, then proceed as in \cite{tang_dp-adambc_2023}.

    We first note that Lemma 3 and Corollary 1 in Appendix F of \cite{tang_dp-adambc_2023} hold, with the proof remaining identical. Moreover, identical arguments as in the proof of Proposition 7 of \cite{tang_dp-adambc_2023} yield the inequalities\footnote{These results are directly shown and/or implicit in the referenced proof; the proof is immediately adaptable.} \begin{equation}\label{uboundMOMENTUM}\sum_{t = 0}^{T-1} \mathbb E[\|u_t\|^2] \leq d\left(\ln \left(1+ \frac{\delta_0^2}{\epsilon_0 (1 - \beta_2)}\right) - T \log \beta_2\right) \end{equation} and moreover
        
\begin{align}\label{mainbound1MOMENTUM}
\mathbb{E}\left[\nabla F(\theta_{t-1})^{\top} u_t\right]
&\ge \frac{1}{2} \sum_{i\in[d]} \sum_{k=0}^{t-1} \beta_1^{k}
       \mathbb{E}\left[
         \frac{\nabla_i F(\theta_{t-k-1})}{\sqrt{\tilde v_{t,k+1,i}-\Phi}}
       \right] \\
&- \frac{\sqrt{1-\beta_1}\eta_t^{2}L^{2}}{4\delta_0}
       \sum_{l=1}^{t-1} \|u_{t-l}\|^{2}
       \sum_{k=1}^{l-1} \beta_1^{k}\sqrt{k} \\
&- \frac{3\delta_0}{\sqrt{1-\beta_1}}
       \sum_{k=0}^{t-1} \left(\frac{\beta_1}{\beta_2}\right)^{2}
       \|u_{t-k}\|^{2}.
\end{align}

In particular, the first statement follows from Lemma 5.2 in \cite{défossez2022simpleconvergenceproofadam}, and the second by using Lemmas 5.1 and 5.2 of the aforementioned work. 


    We now proceed as follows. First, we bound the magnitude of the parameters $\theta_t$ in expectation. Let $$R = d\left(\ln \left(1+ \frac{\delta_0^2}{\epsilon_0 (1 - \beta_2)}\right) - T \log \beta_2\right) .$$ 

    \textbf{Claim:} $\mathbb E\|\theta_t\|^2 \leq \|\theta_0\|^2 + c(\lambda) R$, where $c(\lambda)$ is a constant depending on $\lambda$ and all other constant parameters. \footnote{Here $c(\lambda)$ notation encapsulates all constant parameters, as well as $\theta_0$.}

    \textbf{Proof:} The main idea is to combine the recursive definition of $\theta_t$ with the AM-GM inequality and \ref{mainbound1}, Let $x_t = \mathbb E \|\theta_t\|^2$ and write \begin{align*}
        x_t = \mathbb E\|\theta_t\|^2 &= (1 - \eta_t \lambda)^2 \mathbb E\|\theta_{t-1}\|^2 + \eta_t^2 \mathbb E\|u_t\|^2 + 2\eta_t (1 - \eta_t\lambda) \mathbb E[\langle \theta_{t-1}, u_t\rangle] \\ &\leq (1-\eta_t \lambda)^2 x_{t-1} + \eta_t^2 \mathbb E\|u_t\|^2 + 2\eta_t (1 - \eta_t \lambda) \left(\frac \lambda 2 \mathbb E\|\theta_{t-1}\|^2 + \frac{1}{2\lambda} \mathbb E\|u_t\|^2\right) \\ &= \left[\eta_t^2 + \frac{\eta_t (1 - \eta_t\lambda)}{\lambda}\right] \mathbb E\|u_t\|^2 + (1-\eta_t\lambda) \mathbb E\|\theta_{t-1}\|^2 \\ &\leq \left(\eta_t^2 + \frac{\eta_t}{\lambda}\right) \mathbb E\|u_t\|^2 + x_{t-1},
    \end{align*}where we have used the fact that $\langle x,y\rangle = \sum x_i y_i \leq \lambda \sum x_i^2 + \frac 1\lambda \sum y_i^2$. It follows by summing across all $t$, telescoping, and applying inequality \ref{ubound} that \begin{align*}x_t &\leq \|\theta_0\|^2 + (1 + \frac 1\lambda)\left(\max_t (\eta_t + \eta_t^2)d\left(\ln \left(1+ \frac{\delta_0^2}{\epsilon_0 (1 - \beta_2)}\right) - T \log \beta_2\right) \right) \\ &=\|\theta_0\|^2 + c(\lambda) \max_t (\eta_t+\eta_t^2) R.\end{align*} This completes the proof of the claim. 

Now we return to the proof of the main theorem. We follow the rough outline used by \cite{tang_dp-adambc_2023}. The main idea is to use the Lipschitzness of the gradient to bound the difference $F(\theta_t) - F(\theta_{t-1})$. One can then telescope this difference and combine the claim with inequalities \ref{ubound} and \ref{mainbound1} and some algebraic manipulations.
    
    Let us write $\theta_t = \theta_{t-1} - \eta_t (u_t + \lambda_t \theta_{t-1})$. Using the Lipschitzness assumption of the gradient, we have \begin{align}\label{333} F(\theta_t) &\leq F(\theta_{t-1}) - \eta_t \langle \nabla F(\theta_{t-1}), u_t\rangle - \eta_t \lambda \langle \nabla F(\theta_{t-1}), \theta_{t-1} \rangle + \frac{\eta_t^2L}{2} \|u_t\|^2 \\ &+ \frac{\eta_t^2L}{2} \lambda^2 \|\theta_{t-1}\|^2 + \eta_t^2 L\lambda \langle u_t, \theta_{t-1}\rangle.\end{align} Taking expectations, we find by the above claim that $\mathbb E\|\theta_{t-1}\|^2 = \|\theta_0\|^2 + c(\lambda) \max_t (\eta_t+\eta_t^2) R$ and hence $$\mathbb E|\langle \nabla F(\theta_{t-1}), \theta_{t-1}\rangle| \leq \frac12 (\mathbb E \|\nabla F(\theta_{t-1})\|^2 + \mathbb E \|\theta_{t-1}\|^2) \leq \frac12(C_1^2+\|\theta_0\|^2 + c(\lambda)\max_t (\eta_t+\eta_t^2) R).$$ Moreover, recall that $$\mathbb E |\langle u_t, \theta_{t-1}\rangle| \leq \frac12 (\mathbb E \|u_t\|^2 + \mathbb E \|\theta_{t-1}\|^2) \leq \frac12 \mathbb E \|u_t\|^2 + \frac12 (\|\theta_0\|^2 + c(\lambda) \max_t (\eta_t+\eta_t^2) R).$$
    
Therefore, taking expectations of \ref{333}, summing over all $t$, combining inequality \ref{mainbound1MOMENTUM} with the above, and lastly using Corollary 1 of \cite{tang_dp-adambc_2023} and the simple inequality $\eta_t \leq \eta_T$, we find that \begin{align*}\frac{\sum_{t=1}^{T}\frac{\eta_t}{\Omega_t}\sum_{k=0}^{t-1}\beta_1^{k}
      \mathbb{E}\left[\left\|\nabla F(\theta_{t-k-1})\right\|^{2}\right]}{2(\delta_0 + C_1)}
&\le F(\theta_{0})-F_{*} \\
&+\frac{\eta_T^{2}L}{2}\sum_{t=1}^{T}\mathbb{E}\left[\left\|u_t\right\|^{2}\right] +\frac{\eta_T^{3}L^{2}\sqrt{1-\beta_1}}{4(\delta+C_1)}
        \sum_{t=1}^{T}\sum_{l=1}^{t-1}\beta_1^{l}\sqrt{l} \\
&+\frac{3\eta_T(\delta+C_1)}{\sqrt{1-\beta_1}}
        \sum_{t=1}^{T}\sum_{k=0}^{t-1}
        \left(\frac{\beta_1}{\beta_2}\right)^{k}
        \mathbb{E}\left[\left\|u_{t-k}\right\|^{2}\right].
\end{align*}
where $\Omega_t = \sqrt{\sum_{j=0}^{t-1} \beta_2^j}$. Rearranging, noting that $\mathbb E F(\theta_T) \geq F_*$, and applying inequality \ref{uboundMOMENTUM}, we find that \begin{align*}
        \mathbb E[\|\nabla F(\theta_\tau)\|^2] &\leq \frac{2(\delta_0 + C_1)(F(\theta_0) - F_*)}{\eta \tilde T} +ER,
    \end{align*} 
    
where the left hand side expectation is taken with respect to the sample $\mathbb P(\tau = t) \propto 1 - \beta_1^{T-t}$, $\tilde T = T - \frac{\beta_1}{1 - \beta_1}$, and $$E = \frac{\eta dL (1 - \beta_1)(\delta_0+C_1)}{(1 - \beta_1 / \beta_2)(1 - \beta_2)} + \frac{2\eta^2 dL^2 \beta_1}{(1 - \beta_1/\beta_2)(1 - \beta_2)^{3/2}} + \frac{12 d(\delta_0 + C_1)^2\sqrt{1-\beta_1}}{(1 - \beta_1/\beta_2)^{3/2} \sqrt{1 - \beta_2}},$$ as in the statement of the theorem. This is exactly the desired inequality and completes the proof of the theorem. 
\end{proof}
\begin{remark}
Similar to the proof of Theorem \ref{conv1}, we note that for DP-AdamW-BC, one only needs to replace the inequalities given by \ref{mainbound1MOMENTUM} and \ref{uboundMOMENTUM} with the corresponding bounds in the proof of Proposition 7 of \cite{tang_dp-adambc_2023}; the remainder of the proof is the exact same. 
\end{remark}

\newpage
\section{Experimental Details}
\label{app:experimental_details}

\subsection{Image Classification}

We train a 5-layer CNN model to perform image classification on the CIFAR-10 dataset \citep{krizhevsky2009learning}. The CIFAR-10 dataset includes 60,000 32x32 images, with 6,000 images for each of 10 classes: airplane, automobile, bird, cat, deer, dog, frog, horse, ship, truck. We first train on the training set containing 50,000 images, then we test on the test set of the 10,000 remaining images. We train the model from scratch with randomized initial model parameters.

To ensure a fair comparison and optimal performance for each optimizer, we conduct extensive hyperparameter sweeps. Following preliminary exploration and to maintain consistency with \citet{tang_dp-adambc_2023}, we fix the $L_2$-norm clipping bound at $C=1.0$ and utilized Adam stability parameters of $5 \times 10^{-8}$. Our primary hyperparameter sweep focuses on optimizing learning rate ($\alpha$) and weight decay ($\lambda$) for each target privacy level ($\epsilon \in \{1, 3, 7\}$) and for both DP-AdamW variants (with and without bias correction). We explore a range of learning rates and weight decay values suitable for each privacy setting based on values commonly used in related DP deep learning literature. All experiments are run for 70 epochs using a batch size of 1024, and results are averaged over 5 trials with different random seeds. We select weight decay $\lambda = 1 \times 10^{-5}$ after sampling over $\lambda \in \{0.01, 0.001, 1 \times 10^{-4}, 1 \times 10^{-5}\}$ and optimizing training loss.

Figures \ref{fig: cifar10_1_loss} and \ref{fig: cifar10_1_accuracy} in Appendix \ref{appendix:cifar} show the training loss and test accuracy over learning rates for $\epsilon = 1$. We select learning rates by examining the values that minimized training loss. We outline learning rates over $\epsilon$ values in Table \ref{table: cifar10_lr}.

\subsection{Text Classification}

We study natural-language understanding under differential privacy on the standard question-natural language inference (QNLI) task from the GLUE natural language benchmark~\citep{wang_glue_2019}. QNLI contains 115{,}669 (question,~sentence) pairs split into 105K/5.5K/5.5K (train/validation/test) examples and asks whether the sentence answers the corresponding question. Performance is reported as classification accuracy on the held-out test set. We use the DP-Adam and DP-Adam-BC results from \citet{tang_dp-adambc_2023} as a baseline comparison, although we independently fine-tune hyperparameters for our implementations of DP-AdamW and DP-AdamW-BC.

Following common practice for private fine-tuning (e.g.~\citealp{li_large_2022,tang_dp-adambc_2023}) we start from the \texttt{bert-base-cased} checkpoint with 110M parameters and freeze all layers except the last Transformer block, the pooler, and the task-specific classifier head, which yields approximately 7M trainable parameters. The model is trained for $10$ epochs with a batch size of $B=32$. Gaussian noise calculated to match the target privacy budget $\epsilon\!\in\!\{1,3,7\}$ is added through \textsc{Opacus}' \texttt{PrivacyEngine} and per-sample gradients are clipped using an $\ell_{2}$-norm threshold before aggregation and noise addition. To ensure compatibility, the same noise parameters are used across all optimizers.

For every combination of $(\epsilon,$ optimizer$)$ under consideration and following the approach of~\citep{tang_dp-adambc_2023}, we perform a grid search over hyperparameters: learning rate $\alpha \in [1\!\times\!10^{-6},5\!\times\!10^{-3}]$, weight decay $\lambda\in\{0,10^{-5},10^{-4}\}$, and gradient clipping bound $C\in\{0.05,0.1,0.2\}$. Additionally, after preliminary exploration, we fix $\gamma=10^{-8}$ for the minimum variance term in DP-AdamW-BC. For each privacy setting, the learning rates and weight decay values were chosen based on those commonly used in the DP deep learning literature. Moreover, the candidate values for the gradient clipping bound were those examined by \citep{tang_dp-adambc_2023} therefore we consider the same values for experimental consistency purposes. We optimize for standard cross-entropy loss during training. Note that this choice of loss function is correct because the specific problem is a classification task, where cross-entropy is the canonical choice, and it is differentiable meaning that it works nicely with the per-sample gradient computation that \textsc{Opacus} requires for differential privacy.

\subsection{Graph Node Classification}

We evaluated different optimizers on the graph node classification task using the ogbn-arxiv task from the Open Graph Benchmark (\citep{hu_open_2021}) and trained a 2-layer differentially private graph convolutional network model (\citep{daigavane_node-level_2022}) without per-layer clipping from scratch. We tested for \(\epsilon\in\{3,6,12\}\) across the optimizers. The total number of parameters is 173,695 and the specific architecture details can be found in the Appendix at Section \ref{appendix:gnn_section}.

For each new optimizer-$\epsilon$ pair, we did a hyperparameter sweep of 10 runs over a validation set using a Bayes-informed sampling over a log-uniform distribution to maximize validation accuracy. The results of this preliminary sweep are reported in Table \ref{tab:hyperparams_gnn_10} in the appendix. With this preliminary finding, we then adjusted the range to be swept to encompass our best guess of the best hyperparameters to run another sweep of 30 runs. The final range is \([10^{-1},10^{-6}]\) for the learning rate and \([10^{-1},10^{-7}]\) for the weight decay. The best learning rate and weight decay chosen from the 30-run sweep are reported in Table \ref{tab:hyperparams_gnn} in the Appendix. We noticed that as \(\epsilon\) increases, the best learning rate decreases while the weight decay generally, though not always, increases. We found that validation accuracy increases as \(\epsilon\) increases, which is expected since models perform better with looser privacy budgets. We attach the plots of the losses in Appendix \ref{appendix:gnn_section} for the case of DP-AdamW under \(\epsilon=3\).

\newpage

\section{Image Classification Experimental Details}
\label{appendix:cifar}

\begin{table}[h]
\centering
\caption{Learning rates for $\epsilon$ values on CIFAR-10}
\begin{tabular}{cccc}
\hline\hline
 & $\epsilon = 1$ & $\epsilon = 3$ & $\epsilon = 7$ \\
\hline
DP-AdamW & $0.0015$ & $0.0015$ & $0.002$ \\
DP-AdamW-BC & $1 \times 10^{-4}$ & $2 \times 10^{-4}$ & $4 \times 10^{-4}$ \\
\hline\hline
\end{tabular}
\label{table: cifar10_lr}
\end{table}

\begin{figure}[h]
\begin{center}
  \includegraphics[width=0.49\linewidth]{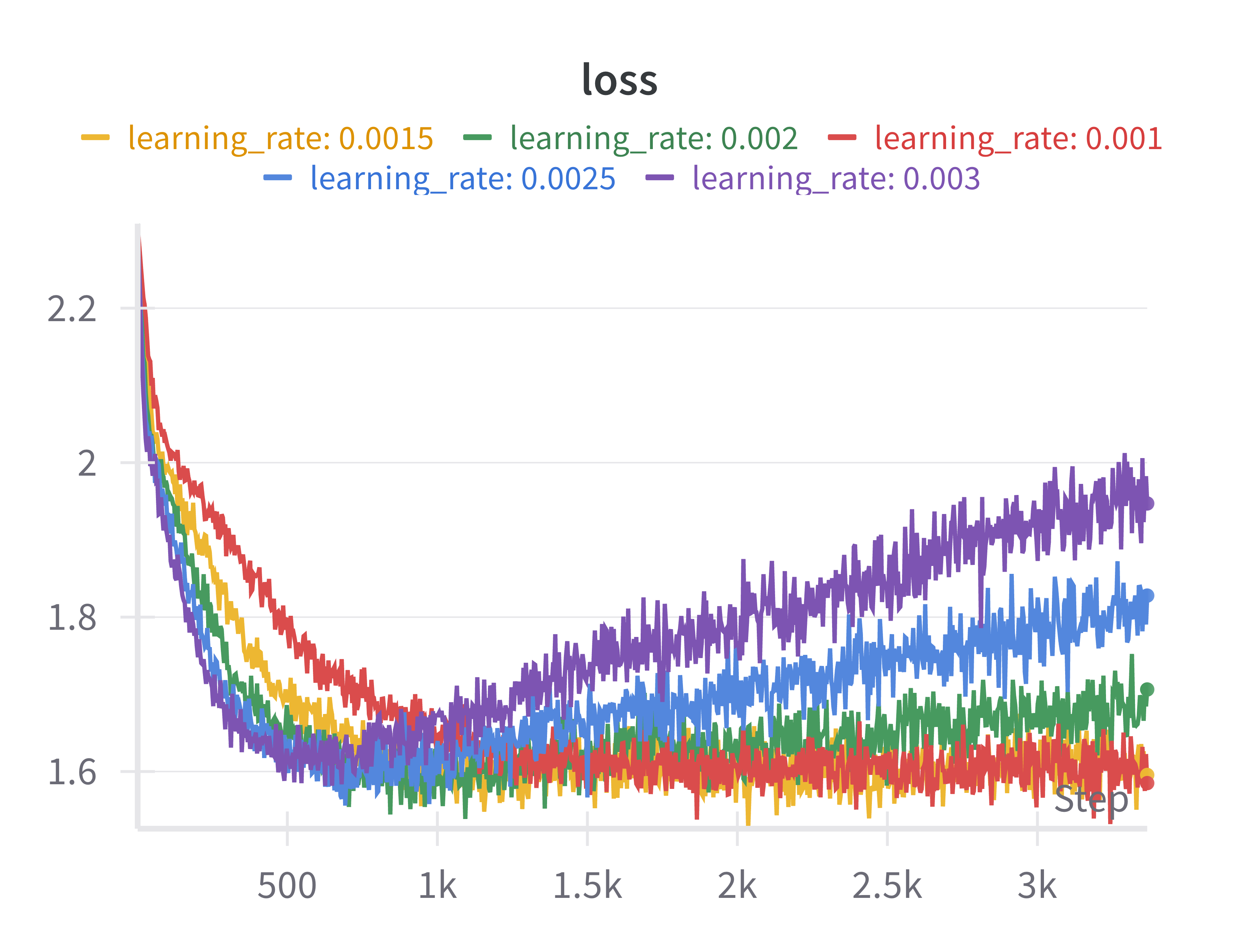}
  \includegraphics[width=0.49\linewidth]{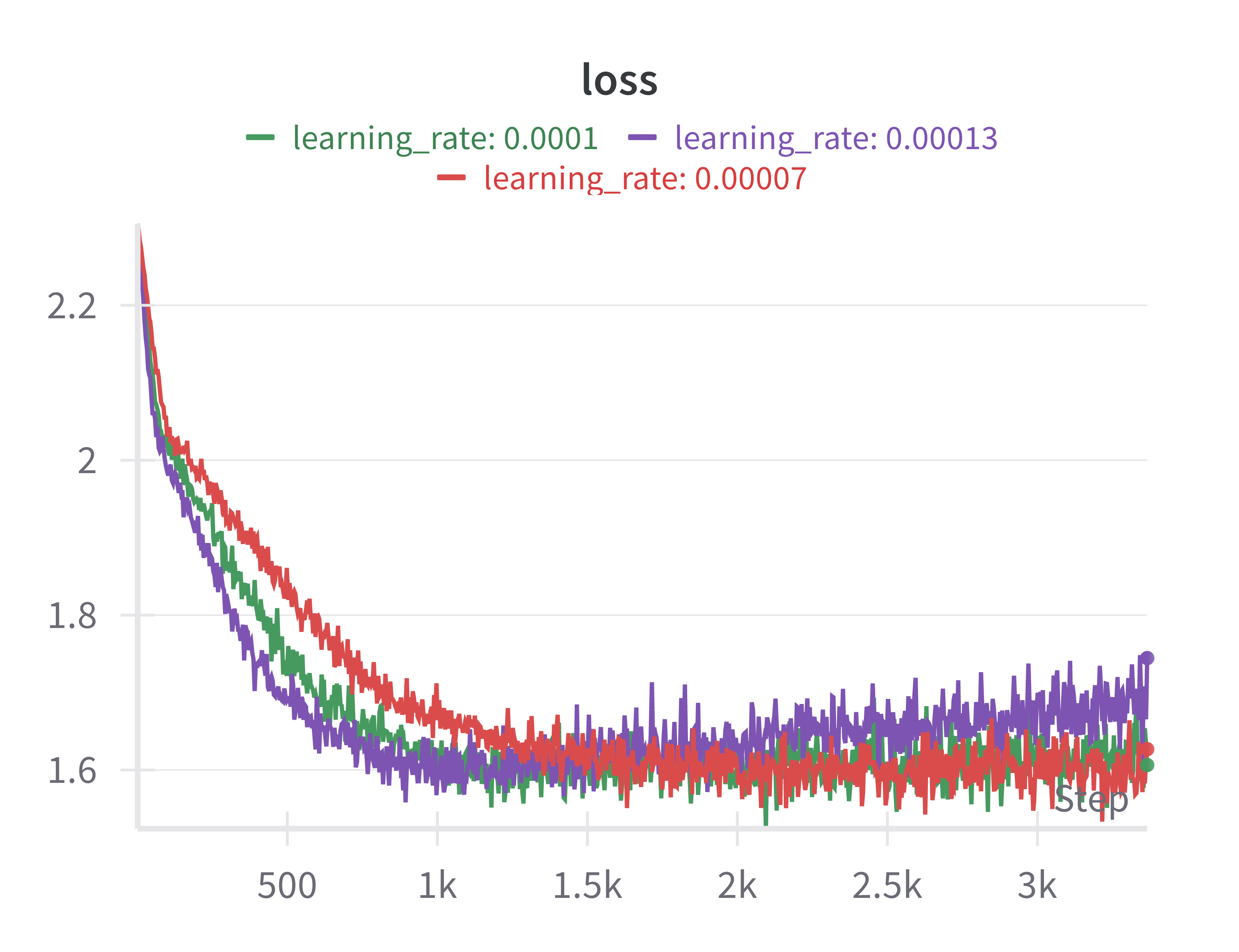}
\end{center}
\caption{Training CIFAR-10 for $\epsilon = 1$ across learning rates for DP-AdamW (left) and DP-AdamW-BC (right), with step on x-axis and training loss on y-axis}
\label{fig: cifar10_1_loss}
\end{figure}

\begin{figure}[h]
\begin{center}
  \includegraphics[width=0.49\linewidth]{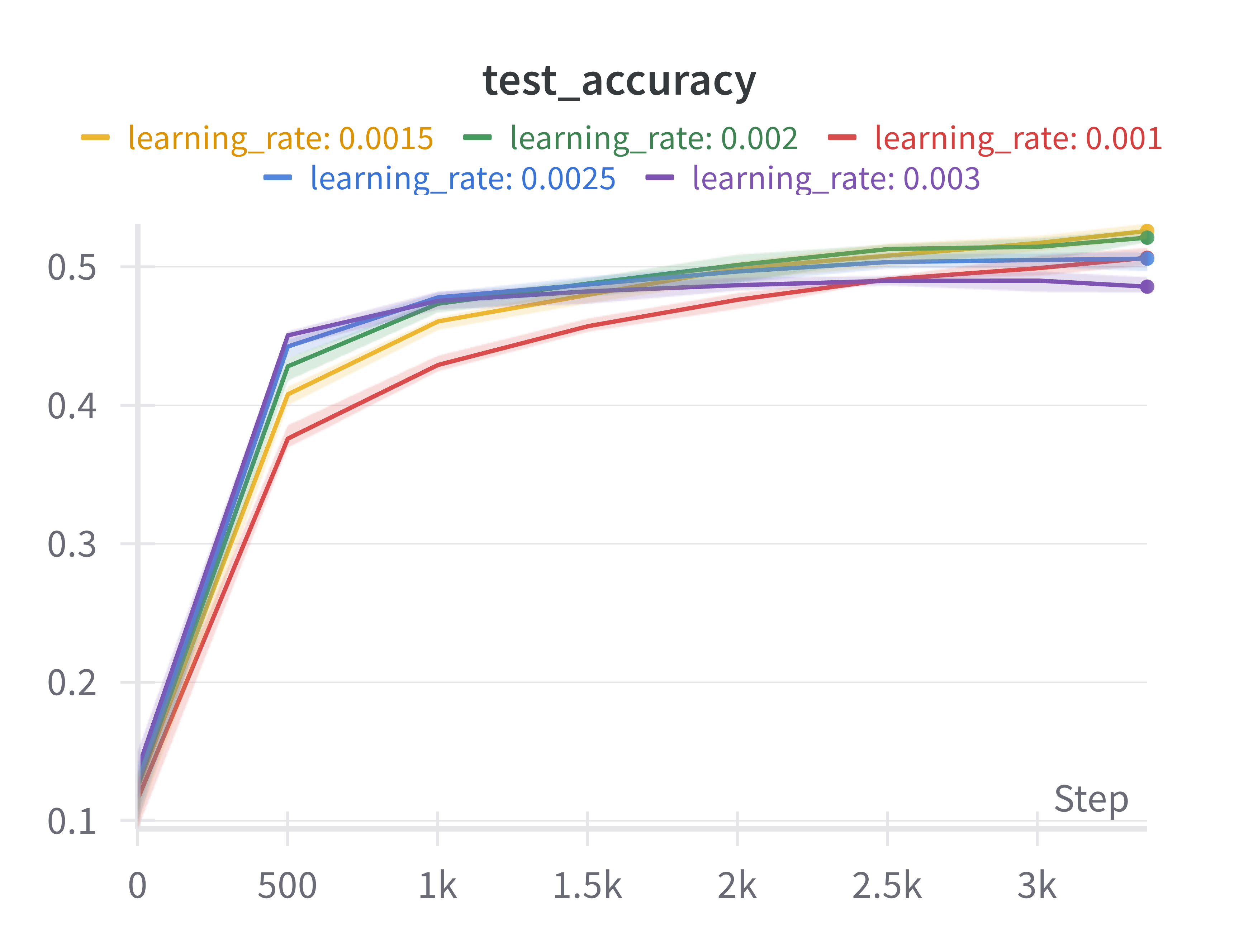}
  \includegraphics[width=0.49\linewidth]{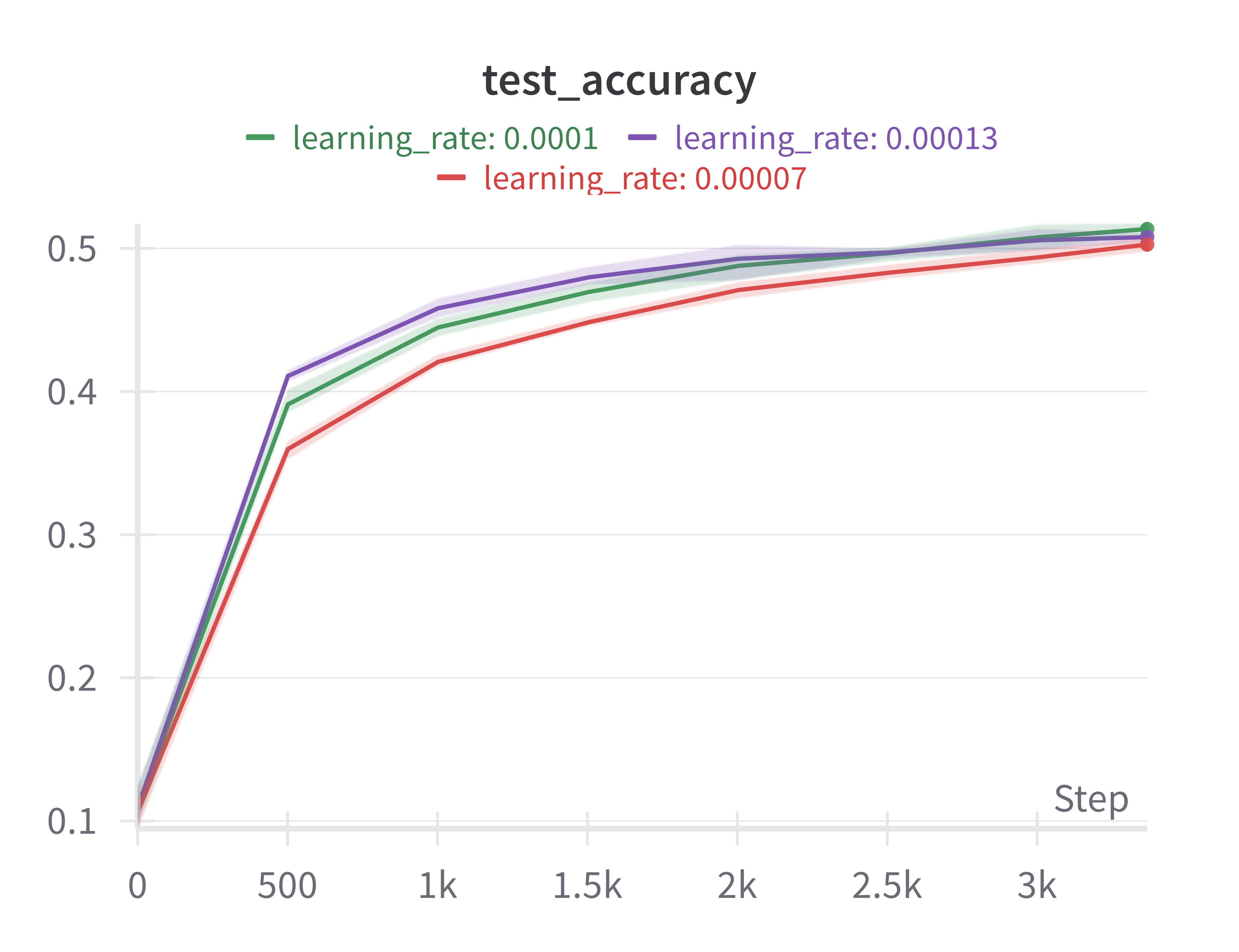}
\end{center}
\caption{Evaluating on CIFAR-10 for $\epsilon = 1$ across learning rates for DP-AdamW (left) and DP-AdamW-BC (right), with step on x-axis and test accuracy (proportion) on y-axis}
\label{fig: cifar10_1_accuracy}
\end{figure}

\newpage

\section{Graph Node Classification Experimental Details}
\label{appendix:gnn_section}

Below we provide more details on the training process of the graph node classification task on which the optimizers were evaluated. Table \ref{tab:gnn-params} shows the architecture of the DP-GCN used in Section \ref{gnn}, with a total of 173,695 parameters (694,780 bytes).

\begin{table}[h]
\centering
\caption{Model parameter names, shapes, and data types}
\label{tab:gnn-params}
\begin{tabular}{lll}
\toprule
\textbf{Parameter} & \textbf{Shape} & \textbf{Dtype} \\
\midrule
Core Layer 0: Dense Bias         & (255,)     & float32 \\
Core Layer 0: Dense Kernel       & (255, 255) & float32 \\
Decoder Layer 0: Dense Bias      & (255,)     & float32 \\
Decoder Layer 0: Dense Kernel    & (255, 255) & float32 \\
Decoder Layer 1: Dense Bias      & (40,)      & float32 \\
Decoder Layer 1: Dense Kernel    & (255, 40)  & float32 \\
Encoder Layer 0: Dense Bias      & (255,)     & float32 \\
Encoder Layer 0: Dense Kernel    & (128, 255) & float32 \\
\bottomrule
\end{tabular}
\end{table}

Table \ref{tab:hyperparams_gnn_10} shows the best hyperparameters for the GNN task identified after the initial 10 runs of the hyperparameter sweep.

\begin{table}[!ht]
\centering
\caption{Hyperparameters and validation performance for DP-AdamW and DP-AdamW-BC on \texttt{obgn-arxiv} node classification (best validation score of 10)}
\label{tab:hyperparams_gnn_10}
\begin{tabular}{c|c|c|c|c}
\hline\hline
\textbf{Optimizer} & \textbf{Epsilon} & \textbf{Learning Rate} & \textbf{Weight Decay} & \textbf{Val.\ Acc.\,(\%)} \\
\hline
\multirow{3}{*}{AdamW}
  & 3   & 0.01494606 & 0.00047264   & 51.32 \\
  & 6   & 0.00752970 & 0.00095908   & 54.44 \\
  & 12  & 0.00434687 & 0.00051222   & 56.16 \\
\hline
\multirow{3}{*}{AdamW-BC}
  & 3   & 0.00018378 & 0.0000026934 & 50.30 \\
  & 6   & 0.00009699 & 0.0000024191 & 54.28 \\
  & 12  & 0.00004261 & 0.00013755   & 55.86 \\
\hline\hline
\end{tabular}
\end{table}

Table \ref{tab:hyperparams_gnn} shows the best hyperparameters for the GNN task identified after the follow-up 30 runs of the hyperparameter sweep.

\begin{table}[!ht]
\centering
\caption{Hyperparameters and validation performance for DP-AdamW and DP-AdamW-BC on \texttt{obgn-arxiv} node classification (best validation score of 30)}
\label{tab:hyperparams_gnn}
\begin{tabular}{c|c|c|c|c}
\hline\hline
\textbf{Optimizer} & \textbf{Epsilon} & \textbf{Learning Rate} & \textbf{Weight Decay} & \textbf{Val.\ Acc.\,(\%)} \\
\hline
\multirow{3}{*}{DP-AdamW}
  & $\epsilon\approx3$  & 0.01246956 & 0.0000004780 & 52.41 \\
  & $\epsilon\approx6$  & 0.00806904 & 0.0000091133 & 55.18 \\
  & $\epsilon\approx12$ & 0.00427425 & 0.0000709134 & 56.27 \\
\hline
\multirow{3}{*}{DP-AdamW-BC}
  & $\epsilon\approx3$  & 0.00021068 & 0.0000324537 & 50.78 \\
  & $\epsilon\approx6$  & 0.00007306 & 0.0000012885 & 54.66 \\
  & $\epsilon\approx12$ & 0.00004086 & 0.03112375   & 56.15 \\
\hline\hline
\end{tabular}
\end{table}

\pagebreak

Figure \ref{fig:gnn_adamw_eps_3} shows DP-AdamW training, validation and test losses across \([10^{-1},10^{-6}]\) for learning rate and \([10^{-1},10^{-7}]\) for weight decay using a privacy budget under \(\epsilon=3\).

\begin{figure}[!ht]
    \centering
    \includegraphics[scale=0.25]{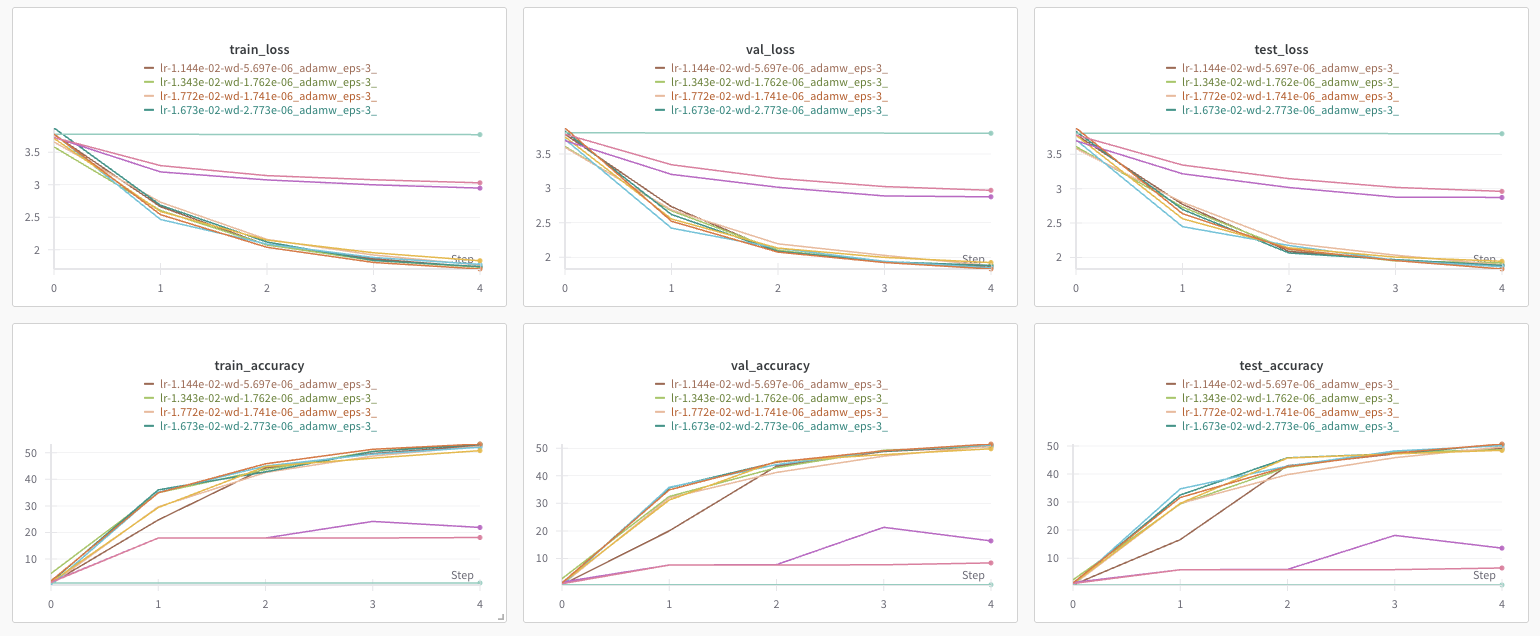}
    \caption{Losses when using DP-AdamW under \(\epsilon=3\)}
    \label{fig:gnn_adamw_eps_3}
\end{figure}

\newpage
\section{Future Work}\label{futurework}

\subsection{Broader empirical generalization within text classification}

Our text classification results in Table \ref{tab:qnli_comparison} show that DP-AdamW both with and without bias correction significantly outperforms other optimizers on the QNLI natural language task. This performance improvement is substantially higher than the corresponding improvements for the image and graph node classification tasks so we would like to investigate further. Potential patterns to examine include persistence across different text subtasks and scaling with dataset size and class balance. One potential research direction is to run our existing QNLI fine-tuning protocol on the SNLI single-sentence entailment, MultiNLI multi-genre entailment, and QQP paraphrase detection tasks described in \citep{bowman_large_2015, wang_glue_2019}, containing diverse training dataset sizes (QNLI $\approx$ 105K, SNLI $\approx$ 550K, MultiNLI $\approx$ 393K, QQP $\approx$ 364K) and subtask types.

\subsection{Scaling laws for DP-AdamW and DP-AdamW-BC}

In addition to increasing the robustness of our text classification analysis, we would also like to investigate the scaling laws of the proposed optimizers on both image and graph node classification. We hypothesize that the difference in performance on text classification compared to the other two tasks may stem from the fact that text classification operates in the large-model, large-dataset regime, while the other two tasks are in the small model regime. Note that the text classification task uses a model with over 110M parameters (bert-base-uncased), while the image classification task uses a 5-layer CNN model while the node classification task uses a 2-layer GNN model, which are several orders of magnitude smaller than that used for text. Evaluating the performance of DP-AdamW and DP-AdamW-BC on other tasks that use large models and large datasets could uncover a larger performance differential relative to other optimizers as found in the text classification results but unobserved in our image classification and node classification results.

\subsection{Examination of bias correction underperformance}

Recall that across our experiments (Table \ref{tab:cifar10_comparison}, Table \ref{tab:qnli_comparison}, Table \ref{tab:gnn_comparison}), adding bias correction consistently reduces accuracy on all three classification tasks for DP-AdamW, although the opposite phenomenon seems true for DP-Adam. We suspect that the $\Phi = (\sigma C/B)^2$ interaction term that is subtracted inside the square root in the denominator could become large and force the algorithm to clamp to $\gamma$; under decoupled weight decay, the subsequent large adaptive step is uniquely not offset inside the gradient so the next decay must account for it. We propose to train a lightweight classifier on a small dataset and plot histograms every epoch of measurements such as the clamping rate, which would help determine if the bias term becomes dominated by estimation noise.


\end{document}